\documentclass[]{template}

\usepackage[utf8]{inputenc}             
\usepackage[T1]{fontenc}                
\usepackage{colortbl}
\usepackage{multicol}
\usepackage[dvipsnames]{xcolor}         

\usepackage{latexsym}

\usepackage{graphicx}
\usepackage{float}
\usepackage{subcaption}
\usepackage{wrapfig}
\usepackage{lipsum}
\usepackage[utf8]{inputenc}
\usepackage[T1]{fontenc}
\usepackage{hyperref}
\usepackage{url}
\usepackage{booktabs}
\usepackage{amsfonts}
\usepackage{amsmath}
\usepackage{amssymb}
\usepackage{nicefrac}
\usepackage{microtype}
\usepackage{xcolor}
\usepackage{algorithmic}
\usepackage{graphicx}
\usepackage{subcaption}
\usepackage{multirow}
\usepackage{makecell}
\usepackage{enumitem}
\usepackage{wrapfig}
\usepackage{comment}
\usepackage[table]{xcolor}
\usepackage{amsmath,amssymb,amsthm}
\newtheorem{theorem}{Theorem}[section]

\newtheorem{proposition}[theorem]{Proposition}

\newcommand{\ie}{\textit{i.e.}}
\newcommand{\eg}{\textit{e.g.}}

\usepackage{bm}

\usepackage{tabularx} 
\usepackage{ragged2e} 
\newcolumntype{L}{>{\RaggedRight\hangafter=1\hangindent=0em}X}

\usepackage{enumitem}

\usepackage{mathtools}

\setboolean{logo}{true}    

\usepackage[linesnumbered,ruled,vlined]{algorithm2e}

\hypersetup{
    colorlinks=true,
    linkcolor=red,
    citecolor=Cerulean,
    filecolor=magenta,      
    urlcolor=magenta,
}

\usepackage[capitalize,noabbrev]{cleveref}
\crefname{section}{§}{§§}
\Crefname{section}{§}{§§}

\usepackage{calligra}
\DeclareMathAlphabet{\mathcalligra}{T1}{calligra}{m}{n}

\usepackage{pifont}

\theoremstyle{plain}
\theoremstyle{definition}
\theoremstyle{remark}

\renewcommand{\paragraph}[1]{\vspace{1mm}\noindent\textbf{#1}}

\DeclareCaptionLabelFormat{cont}{#1~#2\alph{ContinuedFloat}}
\usepackage[most]{tcolorbox}
\tcbset{
  promptbox/.style={
    top=10pt,
    colback=lightgray!20,
    colframe=Black,
    colbacktitle=NavyBlue,
    enhanced,
    center,
    attach boxed title to top center={yshift=-0.1in,xshift=0.0in},
    boxed title style={boxrule=0pt,colframe=white,},
  }
}
\newtcolorbox{promptbox}[2][]{promptbox, title=#2,#1}
\tcbset{
  takeawaybox/.style={
    top=10pt,
    colback=lightgray!20,
    colframe=Black,
    colbacktitle=BurntOrange,
    enhanced,
    center,
    attach boxed title to top center={yshift=-0.1in,xshift=0.0in},
    boxed title style={boxrule=0pt,colframe=white,},
  }
}
\newtcolorbox{takeawaybox}[2][]{takeawaybox, title=#2,#1}
\tcbset{
  observationbox/.style={
    top=10pt,
    colback=lightgray!20,
    colframe=Black,
    colbacktitle=YellowGreen,
    enhanced,
    center,
    attach boxed title to top center={yshift=-0.1in,xshift=0.0in},
    boxed title style={boxrule=0pt,colframe=white,},
  }
}
\newtcolorbox{observationbox}[2][]{observationbox, title=#2,#1}

\usepackage{xspace}

\newcommand\blfootnote[1]{%
  \begingroup
  \renewcommand\thefootnote{}\footnote{#1}%
  \addtocounter{footnote}{-1}%
  \endgroup
}

\usepackage{CJK}

\title{Group Entropy-Controlled Policy Optimization}

\author{Guangran Cheng}
\author{Chengqi Lyu}
\author{Songyang Gao}
\author{Wenwei Zhang$^\dagger$}
\author{Kai Chen$^\dagger$}

\affil{Shanghai AI Laboratory}

\begin{abstract}
Entropy control has become an effective tool in reinforcement learning (RL) of large language models (LLMs), helping balance exploration-exploitation trade-off during alignment process.
Such RL paradigm is often conducted on mixtures of heterogeneous tasks, which induce distinct entropy regimes under the same policy, making global or token-level entropy regulation insufficient to corresponding heterogeneous needs of exploration.
This heterogeneity further makes GRPO-style normalized advantages induce an entropy-dependent bias, making advantage signals across prompt groups statistically non-comparable.
To address this issue, we propose Group Entropy-Controlled Policy Optimization (GEPO), a lightweight extension to GRPO that uses group entropy, estimated from existing grouped samples to perform entropy-conditioned asymmetric advantage shaping.
GEPO attenuates positive advantages in low-entropy groups to reduce over-exploitation, and negative advantages in high-entropy groups to preserve exploration, with adaptive thresholds derived from historical entropy statistics.
Extensive experiments on two base models across thirteen benchmarks spanning mathematics, physics, science, code generation, and instruction following show that GEPO consistently outperforms GRPO and recent entropy-controlled methods, delivering balanced cross-task improvements while preserving task-specific exploration levels throughout training. 
\end{abstract}
\vspace{-2cm}

\begin{document}

\maketitle

\begin{center}
\includegraphics[width=1\textwidth]{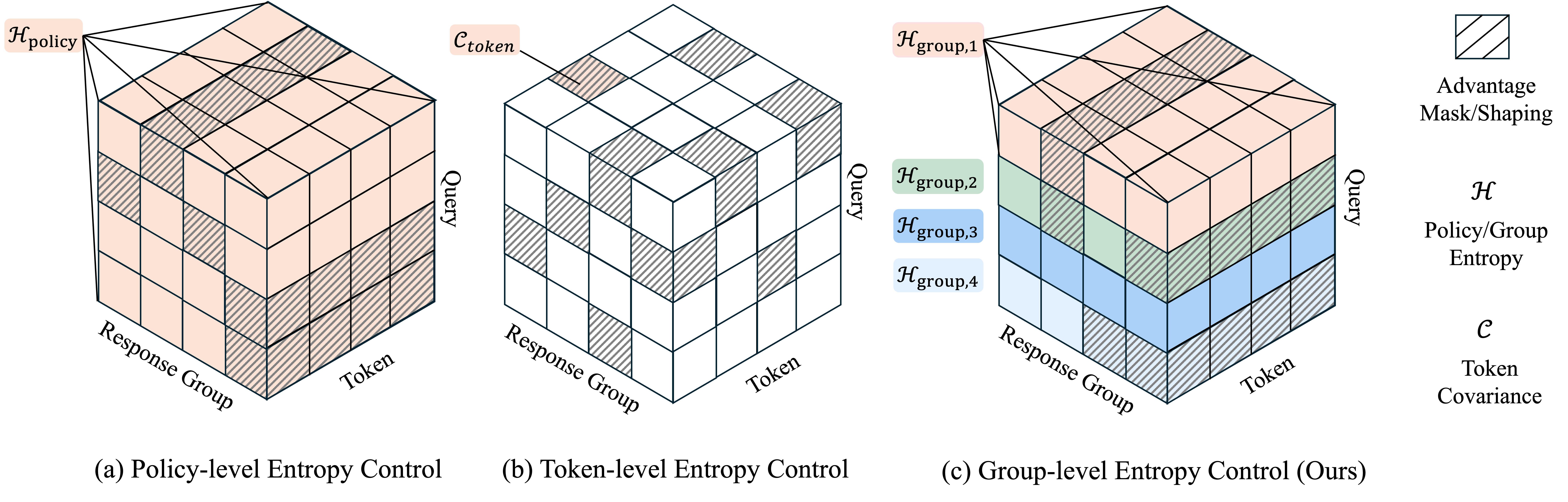}
\captionof{figure}{Comparison of entropy control methods at different granularities. 
(a): Policy-level methods calculate the entropy of policy over the current batch and unanimously apply it to regulate all responses; 
(b): Token-level methods calculate the covariance of tokens over the current batch and individually apply it to regulate each token;
(c): GEPO calculate group entropy over the responses of each prompt and respectively regulate responses, enabling task-specific exploration-exploitation trade-off across heterogeneous tasks.}
\label{fig:first_page}
\end{center}

\blfootnote{$\dagger$ Corresponding authors: Wenwei Zhang (zhangwenwei@pjlab.org.cn), Kai Chen (chenkai@pjlab.org.cn)}

\section{Introduction}

Reinforcement learning (RL) has recently emerged as an effective post-training paradigm for enhancing both alignment and reasoning capabilities of Large Language Models (LLMs), yielding substantial improvements across a wide range of downstream tasks \cite{shao2024deepseekmath,team2025kimi,jaech2024openai,hu2025open}.
Nevertheless, balancing exploration and exploitation remains a longstanding challenge in RL \cite{sutton1998reinforcement,bellemare2016unifying}, which becomes increasingly pronounced in LLM post-training due to the optimization over heterogeneous task mixtures.
Specifically, modern LLM post-training is typically performed on mixtures of diverse tasks, spanning instruction following, coding, writing, and mathematical reasoning, which differ substantially in structure, solution diversity, and uncertainty of policy exploration.
Therefore, a central challenge in applying RL to multi-task mixtures is exploration-exploitation trade-off across heterogeneous tasks.

Recent RL methods seek to balance exploration and exploitation through entropy-controlled mechanisms operating either at the policy level (Figure \ref{fig:first_page}(a)) \cite{yang2025entropic,shen2025qwenlong} or at the token level (Figure \ref{fig:first_page}(b)) \cite{cheng2026reasoning,cui2025entropy}.
However, these approaches primarily treat entropy as a global indicator for sustaining stable RL training, overlooking the optimization imbalance among heterogeneous tasks.
Specifically, Group Relative Policy Optimization (GRPO), one of the dominant algorithms in LLM RL, standardizes rewards within each prompt group to construct relative advantages, implicitly treating the resulting advantage signals as unbiased across groups.
Yet whether this assumption remains valid when prompt groups exhibit substantially different levels of entropy, and how such entropy heterogeneity affects joint multi-task optimization, remain largely under-explored in existing entropy-controlled methods.

In this work, we make the first attempt to delve deep into this problem, and reveal that the group entropy (\ie, the entropy calculated from the response groups of each task) indeed varies from their corresponding task characteristics, which may implicitly induces a structural bias in GRPO's normalized advantages, causing the optimization of different tasks imbalanced.
Based on this insight, we propose Group Entropy-Controlled Policy Optimization (GEPO) (Figure \ref{fig:first_page}(c)), a lightweight and model-agnostic extension to related group-based policy optimization methods that performs entropy-conditioned asymmetric advantage shaping at the group level.
The key idea is to use group entropy as a diagnostic signal to identify and mitigate the optimization bias induced by entropy heterogeneity.
Specifically, GEPO attenuates positive advantages in low-entropy groups to prevent over-exploitation that would further amplify the entropy gap, while attenuating negative advantages in high-entropy groups to avoid prematurely suppressing exploration.
This shaping is asymmetric because we empirically find that low-entropy groups are more susceptible to aggressive intervention, which may trigger length collapse, and therefore require milder attenuation than high-entropy groups.
Furthermore, the entropy boundaries for advantage shaping are adaptively derived from historical entropy statistics and smoothed with exponential moving average, enabling automatic calibration across base models and training stages without task-specific tuning.

Extensive experiments on Intern-S1-mini and Qwen3.5-9B across thirteen diverse benchmarks demonstrate that GEPO consistently improves both average performance and inter-task balance, achieving state-of-the-art results among GRPO and entropy-aware RL methods. 
Further analysis further shows that GEPO induces more stable optimization trajectories, and consistently benefits tasks with diverse exploration states. 
Notably, these gains are obtained with zero additional sampling cost and can be directly incorporated into existing group-based policy optimization methods.

\section{Method}
\subsection{Preliminary}
Group Relative Policy Optimization (GRPO) (\cite{shao2024deepseekmath}) is a reinforcement learning (RL) optimization framework for large language model (LLM) post-training that eliminates the need for a separate value function by estimating advantages from grouped samples. 
Given a prompt $x$, GRPO samples $K$ responses $\{y_1, y_2, \ldots, y_K\}$ from the current policy $\pi_\theta$, obtains rewards $\{r_1, r_2, \ldots, r_K\}$, and computes normalized advantages:
\begin{equation}
   A_i = \frac{r_i - \text{mean}(\{r_j\}_{j=1}^K)}{\text{std}(\{r_j\}_{j=1}^K)}, \quad i = 1, \ldots, K.
    \label{eq:grpo_advantage}
\end{equation}
The GRPO objective maximizes the clipped surrogate:
\begin{equation}
    \mathcal{L}_{\text{GRPO}}(\theta) = \mathbb{E}_{x, \{y_i\}} \left[ \frac{1}{K} \sum_{i=1}^{K} \frac{1}{T_i} \sum_{t=1}^{T_i} \left( \min\left( \rho_{i,t} A_i, \, \text{clip}(\rho_{i,t}, 1\!-\!\epsilon, 1\!+\!\epsilon) A_i \right)\right) \right],
    \label{eq:grpo_loss}
\end{equation}
where $\rho_{i,t} = \frac{\pi_\theta(y_{i,t} | x, y_{i,<t})}{\pi_{\theta_{\text{old}}}(y_{i,t} | x, y_{i,<t})}$ is the importance sampling ratio, and $T_i$ is the length of response $y_i$.

\subsection{Group Entropy and Optimization Dynamics}
\label{sec:group_entropy}

We define group entropy, a signal computed for each response group that reflects the current exploration state of the policy for each task, and examine its empirical properties under GRPO's grouped sampling procedure.

For a prompt $x$, we define the sequence-level entropy as
\begin{equation}
    H(\pi_\theta(\cdot \mid x))= \mathbb{E}_{y \sim \pi_\theta(\cdot \mid x)}\!\left[-\log \pi_\theta(y \mid x)\right] = \mathbb{E}_{y \sim \pi_\theta(\cdot \mid x)}\!\left[-\sum_{t=1}^{T} \log \pi_\theta(y_t \mid y_{<t}, x)\right],
\label{eq:sequence_entropy}
\end{equation}
which directly characterizes the uncertainty of the policy on prompt $x$: low values indicate that $\pi_\theta$ concentrates its probability on a narrow set of responses (potential over-exploitation), while high values indicate that $\pi_\theta$ spreads probability across diverse responses (active exploration, but potentially unstable optimization).

However, computing $H(\pi_\theta(\cdot|x))$ exactly is intractable for LLMs with the large vocabulary size and response length for reasoning tasks. 
The expectation form of Equation~\ref{eq:sequence_entropy} admits a natural Monte Carlo estimator. 
In GRPO, group responses $\{y_1, \ldots, y_K\}$ are sampled i.i.d.\ from $\pi_\theta(\cdot|x)$ for each prompt $x$. 
Therefore, we can define \textbf{group entropy} as an estimator of the group-based sequence-level entropy:
\begin{equation}
    \hat{H}_{\text{g}}(x) \;=\; -\frac{1}{K} \sum_{i=1}^{K} \sum_{t=1}^{T_i} \log \pi_\theta(y_{i,t} \mid y_{i,<t}, x).
    \label{eq:group_entropy_mc}
\end{equation}
As a standard Monte Carlo estimator,
$\hat{H}_{\mathrm g}(x)$ is an unbiased estimate of
$H(\pi_\theta(\cdot\mid x))$, with variance decreasing as
$O(K^{-1})$.
In practice, we normalize by the average response length to obtain the per-token group entropy $\mathcal{H}_{\text{g}}(x) = \hat{H}_{\text{g}}(x) / \bar{T}$, where $\bar{T} = \frac{1}{K}\sum_{i=1}^K T_i$, to improve comparability across groups with different response lengths.

We now present two empirical observations from the first rollout of GRPO training. 
Specifically, group entropy is computed for each response group associated with a prompt, while semantic domains are used only to aggregate and visualize these group-level quantities. 
These observations motivate the method design in Section~\ref{sec:gepo_method}.
Detailed experimental settings are in Section~\ref{experimental_setting}.

\noindent\textbf{Observation 1: group entropy is heterogeneous across domains.}
Figure~\ref{fig:initial_entropy} shows the domain-conditional distributions of group entropy.
Under the same base policy, prompts from different domains occupy markedly different entropy regimes: for example, physics prompts concentrate around $\mathcal{H}_{\text{g}} \approx 0.49$, whereas general knowledge prompts are centered near $\mathcal{H}_{\text{g}} \approx 0.81$.
Consequently, a single global entropy statistic cannot adequately characterize the exploration states of all prompts in a multi-domain training mixture.

\begin{figure}[t]
  \centering
  \begin{subfigure}[b]{0.48\textwidth}
    \centering
    \includegraphics[width=\textwidth]{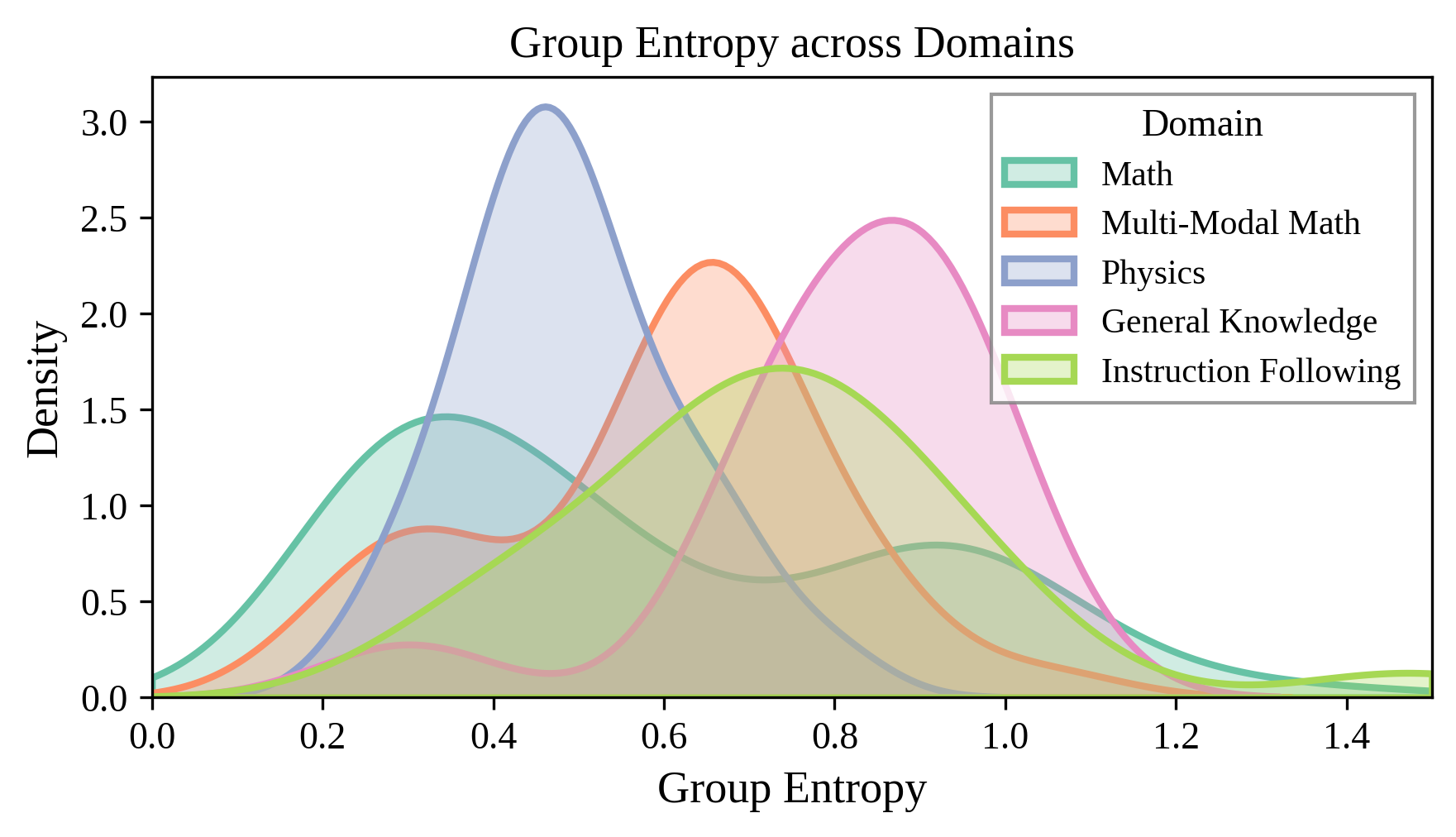}
    \caption{group entropy by domain.}
    \label{fig:initial_entropy}
  \end{subfigure}
  \hfill
  \begin{subfigure}[b]{0.48\textwidth}
    \centering
    \includegraphics[width=\textwidth]{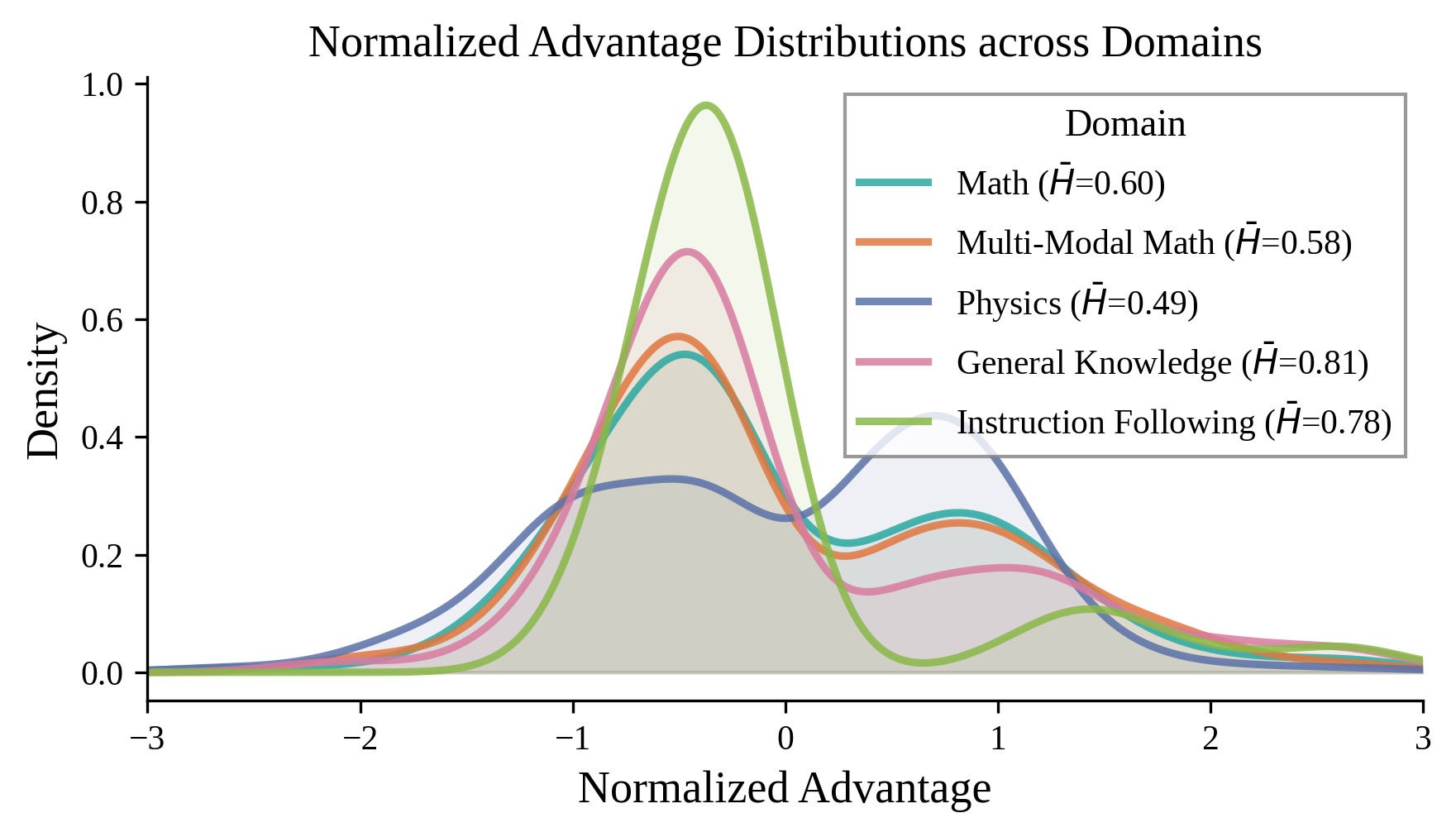}
    \caption{Normalized advantage distributions by domain.}
    \label{fig:advantage_by_task}
  \end{subfigure}
  \caption{Preliminary observations from the first GRPO rollout before any policy update ($K\!=\!16$ responses per prompt).
  \textbf{(a)}~Domain-conditional distributions of group entropy.
  \textbf{(b)}~Domain-conditional distributions of GRPO-normalized advantages.
  Together, the panels show that semantic domains occupy different exploration and credit-assignment regimes within the same training mixture.}
  \label{fig:preliminary_obs}
\end{figure}

\noindent\textbf{Observation 2: Domain-level entropy regimes coincide with distinct advantage profiles.}
Figure~\ref{fig:advantage_by_task} shows that domains occupying different group entropy regimes also exhibit markedly different normalized-advantage distributions:
\begin{itemize}[leftmargin=*]
\item The low-entropy Physics domain ($\bar{\mathcal{H}}_g \!=\! 0.49$, accuracy 48\%) produces a nearly \emph{symmetric bimodal} advantage distribution (skewness $\approx 0$), with balanced positive and negative signals.
\item The medium-entropy Math domain ($\bar{\mathcal{H}}_g \!=\! 0.60$, accuracy 39\%) exhibits moderate right skew ($\approx 0.75$), with more negative than positive advantages.
\item The high-entropy Instruction Following domain ($\bar{\mathcal{H}}_g \!=\! 0.78$, accuracy 16\%) yields extreme right skew (skewness $= 2.32$, kurtosis $= 4.84$): the vast majority of responses cluster near a small negative advantage, while the rare correct responses become extreme positive outliers.
\end{itemize}

The observed asymmetry is not caused by entropy alone, but arises mechanically from the lower group accuracy that accompanies the high-entropy regime in this rollout. 
When group accuracy $p\to 0$, the zero-mean constraint forces positive advantages to concentrate on rare correct responses with magnitude $|A_+|=\sqrt{(1-p)/p}\gg 1$, while negative advantages are diluted across the majority with $|A_-|=\sqrt{p/(1-p)}\ll 1$ (see Appendix~\ref{app:advantage_asymmetry_analysis} for the full derivation).
Beyond this intra-group asymmetry, we analyze a more fundamental \emph{inter-group} issue: standard group normalization does not guarantee comparable advantage signals across groups with different entropy levels.

To formalize this, let $\nu_x$ denote a reference measure over the response space $\mathcal{A}_x$ (\eg, the uniform distribution over feasible responses), and define the \emph{reference reward cumulative distribution function (CDF)} $F_x^\nu(t):=\nu_x(\{a:\mu_x(a)\le t\})$ and the \emph{policy-weighted reward CDF} $F_x^\pi(t):=\pi_x(\{a:\mu_x(a)\le t\})$, where $\mu_x(a)=\mathbb{E}[R\mid x,a]$.

\begin{proposition}[Entropy-Dependent Distribution Distortion]
\label{prop:distortion}
For any prompt $x$ with conditional policy entropy $H(x)=-\sum_a \pi_x(a)\log\pi_x(a)$ and response space of size $N_x = |\mathcal{A}_x|$, the Kolmogorov distance between the policy-weighted and reference reward distributions satisfies
\begin{equation}
    \sup_t \left|F_x^\pi(t) - F_x^\nu(t)\right| \;\leq\; \mathrm{TV}(\pi_x, \nu_x) \;\leq\; \sqrt{\frac{1}{2}\left(\log N_x - H(x)\right)},
    \label{eq:distortion_bound}
\end{equation}
where the second inequality holds when $\nu_x$ is the uniform distribution over $\mathcal{A}_x$, following from Pinsker's inequality (\cite{tsybakov2009introduction}) applied to $\mathrm{KL}(\pi_x\|\nu_x)=\log N_x - H(x)$.
\end{proposition}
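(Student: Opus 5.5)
The plan is to prove the two inequalities in \eqref{eq:distortion_bound} separately and then chain them. Throughout, treat the response space $\mathcal{A}_x$ as a finite set of size $N_x$. Both $\pi_x$ and $\nu_x$ are probability measures on this set. The map $a\mapsto \mu_x(a)$ is an arbitrary real-valued function on $\mathcal{A}_x$.

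\textbf{First inequality.} For each threshold $t\in\mathbb{R}$, let $S_t:=\{a\in\mathcal{A}_x:\mu_x(a)\le t\}$. By definition, $F_x^\pi(t)=\pi_x(S_t)$ and $F_x^\nu(t)=\nu_x(S_t)$. Hence
\begin{equation*}
\left|F_x^\pi(t)-F_x^\nu(t)\right| = \left|\pi_x(S_t)-\nu_x(S_t)\right| \le \sup_{S\subseteq\mathcal{A}_x}\left|\pi_x(S)-\nu_x(S)\right| = \mathrm{TV}(\pi_x,\nu_x).
\end{equation*}
Taking the supremum over $t$ gives the first inequality. This is just the data-processing property of total variation under the pushforward by $\mu_x$. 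It holds for any reference measure $\nu_x$, not only the uniform one.

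\textbf{Second inequality.} Now specialize to $\nu_x(a)=1/N_x$. A direct computation gives
\begin{equation*}
\mathrm{KL}(\pi_x\|\nu_x)=\sum_a\pi_x(a)\log\bigl(N_x\pi_x(a)\bigr)=\log N_x-H(x).
\end{equation*}
Note that $\nu_x$ has full support, so the KL divergence is finite. Terms with $\pi_x(a)=0$ contribute zero under the convention $0\log 0=0$. Pinsker's inequality (\cite{tsybakov2009introduction}) then states $\mathrm{TV}(\pi_x,\nu_x)\le\sqrt{\tfrac12\mathrm{KL}(\pi_x\|\nu_x)}$. Substituting the KL expression yields the stated bound. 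As a side check, $\log N_x-H(x)\ge 0$ because entropy on $N_x$ points is at most $\log N_x$, so the square root is well defined.

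\textbf{Expected obstacle.} There is little genuine difficulty here; the main point is bookkeeping of conventions. I would check that $\mathrm{TV}$ is normalized as $\sup_S|\pi(S)-\nu(S)|=\tfrac12\|\pi-\nu\|_1$, since Pinsker with constant $\tfrac12$ requires exactly this normalization. I would also check that $H(x)$ and the KL divergence use the same logarithm base, namely natural logs.

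A further caveat is the modeling assumption. In the LLM setting the response space must be taken as finite, for example by bounding the maximum length, so that $N_x<\infty$ and the uniform reference measure exists. I would state this explicitly before chaining the two inequalities.
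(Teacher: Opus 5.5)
Your proposal is correct and follows the paper's proof essentially line for line. You bound the Kolmogorov distance by restricting the total-variation supremum to the level sets $\{a:\mu_x(a)\le t\}$, then compute $\mathrm{KL}(\pi_x\|\nu_x)=\log N_x-H(x)$ for uniform $\nu_x$ and apply Pinsker, and your extra bookkeeping (TV normalization, natural logarithms, finiteness of $\mathcal{A}_x$ via a maximum generation length) is consistent with the paper's formal setup.
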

Proposition~\ref{prop:distortion} shows that low policy entropy permits a larger worst-case deviation between the policy-weighted and reference reward distributions. Thus, low-entropy prompts can be more sensitive to policy-induced reweighting of the response space, although the bound does not imply that the maximal deviation is attained for every prompt.

\begin{proposition}[Structural Bias of Normalized Advantages]
\label{prop:adv_bias}
\leavevmode\indent
Let $Z_x^\pi(a) = (\mu_x(a)-m_x^\pi)/\sigma_x^\pi$ denote the advantage under policy-weighted moments, and $Z_x^\nu(a) = (\mu_x(a)-m_x^\nu)/\sigma_x^\nu$ the advantage under reference moments. Assume $|\mu_x(a)|\le M$ and $\sigma_x^\pi, \sigma_x^\nu \ge \sigma_0 > 0$ for all $x,a$. Then there exists a constant $C=C(M,\sigma_0)>0$ such that
\begin{equation}
    \sup_{a\in\mathcal{A}_x} \left|Z_x^\pi(a) - Z_x^\nu(a)\right| \;\leq\; C\,\mathrm{TV}(\pi_x,\nu_x) \;\leq\; C\sqrt{\frac{1}{2}\left(\log N_x - H(x)\right)}.
    \label{eq:adv_bias_bound}
\end{equation}
\end{proposition}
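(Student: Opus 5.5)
The argument is a perturbation bound on the two standardized scores, reduced to a bound on the discrepancy between the policy-weighted and reference moments. These moments are $m_x^\pi=\sum_a\pi_x(a)\mu_x(a)$, $m_x^\nu=\sum_a\nu_x(a)\mu_x(a)$, and $\sigma_x^\pi,\sigma_x^\nu$ are the corresponding standard deviations of $\mu_x(a)$. The second inequality in the statement is exactly the second inequality of Proposition~\ref{prop:distortion}, obtained from Pinsker with $\mathrm{KL}(\pi_x\|\nu_x)=\log N_x-H(x)$ when $\nu_x$ is uniform. So all the work is in showing
\[
\sup_a |Z_x^\pi(a)-Z_x^\nu(a)|\le C\,\mathrm{TV}(\pi_x,\nu_x).
\]

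\textbf{Step 1 (moments under TV).} For any function $f$ with $|f|\le B$ we have $|\mathbb{E}_\pi f-\mathbb{E}_\nu f|\le 2B\,\mathrm{TV}(\pi_x,\nu_x)$. This follows by writing the difference as $\sum_a(\pi_x(a)-\nu_x(a))f(a)$ and using $\sum_a|\pi_x-\nu_x|=2\,\mathrm{TV}$.
\begin{itemize}
\item Applying it to $f=\mu_x$ gives $|m_x^\pi-m_x^\nu|\le 2M\,\mathrm{TV}$.
\item Applying it to $f=\mu_x^2$ gives $|\mathbb{E}_\pi\mu_x^2-\mathbb{E}_\nu\mu_x^2|\le 2M^2\,\mathrm{TV}$.
\item Since $|m^\pi+m^\nu|\le 2M$, we get $|(m^\pi)^2-(m^\nu)^2|\le 4M^2\,\mathrm{TV}$.
\end{itemize}
Combining, $|(\sigma_x^\pi)^2-(\sigma_x^\nu)^2|\le 6M^2\,\mathrm{TV}$.

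\textbf{Step 2 (from variances to standard deviations).} Use
\[
|\sigma^\pi-\sigma^\nu|=\frac{|(\sigma^\pi)^2-(\sigma^\nu)^2|}{\sigma^\pi+\sigma^\nu}\le \frac{3M^2}{\sigma_0}\,\mathrm{TV}.
\]
This is where the lower bound $\sigma_0$ is essential. Without it the square root is not Lipschitz near zero, and this is the only delicate point of the argument.

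\textbf{Step 3 (decomposing the score difference).} For fixed $a$, add and subtract $(\mu_x(a)-m^\nu)/\sigma^\pi$:
\[
Z^\pi-Z^\nu=\frac{m^\nu-m^\pi}{\sigma^\pi}+(\mu_x(a)-m^\nu)\Big(\frac{1}{\sigma^\pi}-\frac{1}{\sigma^\nu}\Big).
\]
The first term is at most $2M\,\mathrm{TV}/\sigma_0$. In the second term, $|\mu_x(a)-m^\nu|\le 2M$ and $|1/\sigma^\pi-1/\sigma^\nu|=|\sigma^\nu-\sigma^\pi|/(\sigma^\pi\sigma^\nu)\le |\sigma^\nu-\sigma^\pi|/\sigma_0^2$. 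Together these give
\[
|Z^\pi-Z^\nu|\le\Big(\frac{2M}{\sigma_0}+\frac{6M^3}{\sigma_0^3}\Big)\mathrm{TV}(\pi_x,\nu_x).
\]
The bound is uniform in $a$, so it holds for the supremum, and we can take $C=2M/\sigma_0+6M^3/\sigma_0^3$.

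The remaining care points are bookkeeping: using the same TV normalization, $\tfrac12\|\cdot\|_1$, as in Proposition~\ref{prop:distortion}, and noting that the stated bound $\le C\sqrt{\tfrac12(\log N_x-H(x))}$ applies when $\nu_x$ is uniform, as in that proposition.
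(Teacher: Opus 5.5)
Your proof is correct and follows the paper's argument step for step. It uses the same TV bounds on the first and second moments to get $|(\sigma_x^\pi)^2-(\sigma_x^\nu)^2|\le 6M^2\,\mathrm{TV}$, the same add-and-subtract decomposition of $Z_x^\pi-Z_x^\nu$, and the same appeal to Pinsker (for uniform $\nu_x$) for the second inequality. The one difference is that you use $\sigma_x^\pi+\sigma_x^\nu\ge 2\sigma_0$ where the paper uses $\ge\sigma_0$, so your constant $C=2M/\sigma_0+6M^3/\sigma_0^3$ is slightly sharper than the paper's $2M/\sigma_0+12M^3/\sigma_0^3$.
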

Proposition~\ref{prop:adv_bias} further shows that per-group centering and scaling do not guarantee a common statistical reference frame across prompts: the potential discrepancy between policy-weighted and reference-normalized advantages remains entropy dependent.

\noindent\textbf{Optimization consequences.}\hspace{0.3cm}
The intra-group skewness (Observation~2) and inter-group structural bias (Propositions~\ref{prop:distortion}--\ref{prop:adv_bias}) jointly create entropy-dependent optimization pressures that manifest at two levels:

Within each high-entropy group, the advantage asymmetry produces two concurrent pathologies.
First, \emph{diffuse and noisy suppression}: incorrect responses receive small negative advantages spread  over many diverse reasoning paths, making the suppression signal unreliable and prone to prematurely  penalizing exploratory trajectories that may still contain useful partial reasoning.
Second, \emph{inconsistent reinforcement}: rare correct responses receive large positive advantages, but  different rollouts often surface different reasoning paths, so the reinforcement signal does not accumulate coherently over training steps.

\emph{Across} groups, the structural bias (Proposition~\ref{prop:adv_bias}) further compounds this imbalance: low-entropy groups, whose advantages are most distorted relative to the true reward landscape, nonetheless produce the strongest and most consistent gradient signals, while high-entropy groups—despite their advantages being closer to the reference frame—contribute weaker, noisier gradients.
As a result, the batch-level optimization is systematically dominated by the already-converging low-entropy tasks.
This creates a self-reinforcing cycle: as low-entropy groups improve, their accuracy rises and entropy drops further, widening the entropy gap and causing high-entropy groups to receive diminishing effective learning signal—ultimately leading to persistent inter-task imbalance.

This analysis motivates an entropy-conditioned intervention that rebalances the advantage signals across entropy regimes, rather than treating all normalized advantage distributions identically.

\subsection{Group Entropy-Controlled Policy Optimization}
\label{sec:gepo_method}

Consequently, we propose our group entropy-controlled policy optimization method.
For a prompt $x$ with group entropy $\mathcal{H}_{\text{g}}(x)$ and advantages $\{A_i\}_{i=1}^K$, we define a group entropy-based coefficient to shape the advantage as
\begin{equation}
    \hat{A}_i = \omega(A_i,\mathcal{H}_{\text{g}})A_i=
    \begin{cases}
        \alpha_{\text{low}}\cdot A_i  & \text{if } A_i > 0\ \text{and}\ \mathcal{H}_{\text{g}}(x) < \mathcal{H}_{\text{low}}^{(t)},\\
        \alpha_{\text{high}}\cdot A_i & \text{if } A_i < 0\ \text{and}\ \mathcal{H}_{\text{g}}(x) > \mathcal{H}_{\text{high}}^{(t)},\\
        A_i & \text{otherwise},
    \end{cases}
    \label{eq:adv_shaping}
\end{equation}
where $\alpha_{\text{high}}, \alpha_{\text{low}} \in (0, 1)$ are the scaling coefficients, and $\mathcal{H}_{\text{low}}^{(t)}$ and $\mathcal{H}_{\text{high}}^{(t)}$ denote the adaptive lower and upper entropy thresholds at training step $t$ (defined in the following paragraph).

Compared with global policy entropy, Equation \ref{eq:adv_shaping} offers several notable advantages. 
First, it is also task-aware: even within the same domain, prompts may vary considerably in difficulty, and group entropy provides a finer-grained characterization at the task level. 
Second, group entropy is naturally compatible with the grouped sampling structure of GRPO, allowing it to be incorporated without introducing additional sampling procedures or computational overhead beyond that already required by GRPO. 
Finally, by conditioning on group entropy, the advantage shaping attenuates the structural bias characterized by Propositions~\ref{prop:distortion}--\ref{prop:adv_bias}: it attenuates the over-exploitative signals from low-entropy groups that would otherwise drive premature convergence, and prevents the diluted negative signals in high-entropy groups from suppressing exploration before the policy has had a chance to discover correct reasoning paths, thereby promoting more balanced learning across entropy regimes.

\noindent\textbf{Asymmetric Advantage Shaping.}\hspace{0.3cm}
We impose the constraint $\alpha_{\text{high}} < \alpha_{\text{low}}$ based on the following empirical observation: in the low-entropy regime, the model is already confident and the policy distribution is sharply peaked.
In this regime, we empirically find that aggressively penalizing negative advantages (which would further increase entropy) may cause length collapse—a pathological behavior where the model dramatically shortens its responses to reduce per-token uncertainty.
By using a higher $\alpha_{\text{low}}$ for positive advantage attenuation in the low-entropy regime, we apply a gentler nudge that encourages exploration without triggering this instability. 
This asymmetry reflects the difference in the fragility of the two entropy regimes.

\noindent\textbf{Adaptive Entropy Thresholds.}\hspace{0.3cm}
Fixed entropy thresholds are particularly brittle in multi-task LLM post-training.
The appropriate entropy range depends on the task mixture, the initialization policy, and the training stage: different base models can exhibit substantially different entropy scales and controllability on the same tasks, and the entropy distribution further shifts as the policy improves. 
Therefore, we define entropy thresholds relative to the empirical group entropy distribution of the current batch. 
At step $t$, we compute the batch mean $\mu_H^{(t)}$ and standard deviation $\sigma_H^{(t)}$ of group entropy, and set
\begin{equation}
    \hat{\mathcal{H}}_{\text{low}}^{(t)} = \mu_H^{(t)} - \beta_{\text{low}} \sigma_H^{(t)}, \qquad
    \hat{\mathcal{H}}_{\text{high}}^{(t)} = \mu_H^{(t)} + \beta_{\text{high}} \sigma_H^{(t)}.
\end{equation}
Here, $\beta_{\text{low}}$ and $\beta_{\text{high}}$ control the width of the lower and upper entropy margins, respectively. 
We then apply exponential moving averages to obtain temporally stable thresholds:
\begin{equation}
    \mathcal{H}_{\text{low}}^{(t+1)} = (1-\gamma) \mathcal{H}_{\text{low}}^{(t)} + \gamma \hat{\mathcal{H}}_{\text{low}}^{(t)}, \qquad
    \mathcal{H}_{\text{high}}^{(t+1)} = (1-\gamma) \mathcal{H}_{\text{high}}^{(t)} + \gamma \hat{\mathcal{H}}_{\text{high}}^{(t)}.
\label{eq:adaptive_entropy_threshold}
\end{equation}
The smoothing coefficient $\gamma \in (0,1)$ governs the trade-off between responsiveness and stability.
This yields a distribution-aware and temporally smooth entropy band that adapts to model-specific entropy scales and changing exploration regimes.
\section{Experiments}
\subsection{Settings}\label{experimental_setting}
We select Intern-S1-mini and Qwen3.5-9B as our base models. Intern-S1-mini is a lightweight open-source multimodal reasoning model based on the same techniques as Intern-S1 \cite{bai2025interns1scientificmultimodalfoundation}, and Qwen3.5-9B is an open-source text-based reasoning model.
We use an in-house multimodal dataset in the training process. 
The dataset is designed to cover diverse task categories and knowledge domains, including mathematics, instruction following, engineering reasoning, physics, and chemistry.
Our implementation is based on LMDeploy \cite{2023lmdeploy} and Xtuner \cite{2023xtuner}.
During rollouts, we set the temperature to 1 and sample 16 responses per prompt. 
Training follows an off-policy RL setup with a batch size of 256 and a minibatch size of 32.
For hyperparameter settings, the scaling coefficients $\alpha_{\text{high}}, \alpha_{\text{low}}$ are set as 0.2 and 0.5.
The adaptive entropy thresholds coefficients $\beta_{\text{high}}$, $\beta_{\text{low}}, \ \text{and}\ \gamma$ are set as 0.3, 0.2 and 0.01.

For evaluation, we evaluate model performance on a diverse benchmark suite spanning mathematical reasoning, scientific reasoning, code generation, instruction following, and multimodal understanding, including GPQA \cite{rein2023gpqagraduatelevelgoogleproofqa}, LiveCodeBench \cite{jain2024livecodebenchholisticcontaminationfree}, IFBench \cite{pyatkin2025generalizingverifiableinstructionfollowing}, AIME2025, IMO-Bench-AnswerBench, CMPhysBench \cite{wang2025cmphysbenchbenchmarkevaluatinglarge}, MMMU-Pro \cite{yue2025mmmuprorobustmultidisciplinemultimodal}, MathVista \cite{lu2024mathvistaevaluatingmathematicalreasoning}, Physics \cite{feng2025physicsbenchmarkingfoundationmodels}, SFE \cite{zhou2025scientistsexamprobingcognitive}, MicroVQA \cite{burgess2025microvqamultimodalreasoningbenchmark}, and ChartQAPro \cite{masry2025chartqaprodiversechallengingbenchmark}. 
This broad evaluation suite enables us to assess both general reasoning ability and robustness across diverse domains, task formats, and multimodal settings.

We compare our approach against GRPO, AEPO \cite{shen2025qwenlong}, Clip-Cov, and KL-Cov \cite{cui2025entropy}.
For Clip-Cov, and KL-Cov, we use the same set of parameters as for Qwen2.5-7B from \cite{cui2025entropy}.
Besides, all other sampling and training parameters are consistent with the GEPO implementation.
To avoid the issues of token dropping, we clip the importance sampling weight as proposed in CISPO \cite{chen2025minimax}, which are also implemented in baseline methods.

\subsection{Main Results}
In this section, we present a comprehensive analysis of GEPO on Intern-S1-mini and Qwen3.5-9B.
All results are derived from the best checkpoint of our method and baselines.
We structure our analysis into three dimensions: the performance across multiple benchmarks, the training stability, and the training dynamics of entropy.

\begin{table}[!t]
  \centering
  \small
  \caption{Main results on thirteen benchmarks across baselines and our methods}
  \setlength{\tabcolsep}{2pt}
  \begin{tabular}{@{}lccccccc@{}}
    \toprule
    & \multicolumn{3}{c}{MATH} 
    & \multicolumn{2}{c}{Physics} 
    & \multirow{2}{*}{\makecell{Instruction\\Following}} 
    & \multirow{2}{*}{Code} \\
    \cmidrule(lr){2-4} \cmidrule(lr){5-6}
    & AIME25 & IMO-Bench & MathVista 
    & Physics & CMPhysBench \\
    \midrule
    \textit{Intern-S1-mini} & 74.6 & 42.3 & 70.5 & 37.4 & 29.2 & 29.8 & 39.4 \\
    + GRPO & 75.7 & 45.5 & 70.0 & 37.8 & 28.0 & 34.3 & 38.9 \\
    + AEPO & 79.5 & 47.5 & \textbf{71.5} & 36.1 & 28.2 & 32.2 & 38.3 \\
    + Clip-Cov & 76.7 & 49.8 & 69.8 & 32.4 & 30.6 & 31.2 & 38.3 \\
    + KL-Cov & 76.4 & 49.8 & 71.2 & 32.6 & \textbf{31.2} & 33.5 & \textbf{40.0} \\
    \rowcolor{gray!15}+ GEPO (ours) & \textbf{82.0} & \textbf{52.8} & 71.0 & \textbf{39.1} & 30.4 & \textbf{38.8} & 39.4 \\
    \midrule
    \textit{Qwen3.5-9B} & 82.5 & 58.5 & 85 & 51.0 & 35.8 & 62.2 & 57.7 \\
    + GRPO & 90.1 & 67.3 & \textbf{86.4} & 56.8 & 41.2 & 80.7 & \textbf{68.0} \\
    + AEPO & 84.2 & 63.8 & 85.3 & 53.2 & 41.5 & 68.0 & 60.6 \\
    + Clip-Cov & \textbf{91.9} & 69.3 & 85.5 & \textbf{59.0} & 42.9 & \textbf{81.3} & 64.0 \\
    + KL-Cov & 89.9 & 66.8 & 85.4 & 55.9 & 41.1 & 80.2 & 64.6 \\
    \rowcolor{gray!15}+ GEPO (ours) & 91.4 & \textbf{69.5} & 84.9 & 58.8 & \textbf{43.2} & 80.7 & 67.4 \\
    \midrule
    & \multicolumn{4}{c}{Knowledge \& Science} 
    & \multicolumn{2}{c}{Visual \& Chart} 
    & \multirow{2}{*}{\textbf{Avg.}} \\
    \cmidrule(lr){2-5} \cmidrule(lr){6-7}
    & MMLU-Pro & GPQA & MMMU-Pro & SFE 
    & ChartQAPro & MicroVQA \\
    \midrule
    \textit{Intern-S1-mini} & 75.0 & 65.1 & 55.0 & 45.0 & 47.5 & 50.4 & 50.9 \\
    + GRPO & 74.3 & 64.3 & 55.7 & 44.7 & 48.5 & 51.8 & 51.5 \\
    + AEPO & 74.7 & 62.5 & 57.4 & \textbf{46.5} & 46.7 & \textbf{52.5} & 51.8 \\
    + Clip-Cov & 75.0 & 65.7 & 55.8 & 43.9 & 48.6 & 51.5 & 51.5 \\
    + KL-Cov & 75.1 & 64.7 & 55.8 & 42.8 & \textbf{49.7} & 50.8 & 51.8 \\
    \rowcolor{gray!15}+ GEPO (ours) & \textbf{77.1} & \textbf{69.2} & \textbf{59.8} & 43.9 & 49.0 & 51.5 & \textbf{54.2} \\
    \midrule
    \textit{Qwen3.5-9B} & 81.6 & 81.9 & 70.1 & 53.1 & 63.4 & 62.9 & 65.0 \\
    + GRPO & 83.4 & \textbf{85.3} & 74.8 & 57.1 & 64.9 & 62.9 & 70.7 \\
    + AEPO & 82.5 & 81.6 & 70.9 & 55.2 & 64.0 & 61.4 & 67.1 \\
    + Clip-Cov & 83.3 & 85.0 & 76.3 & 57.7 & 62.9 & 63.1 & 70.9 \\
    + KL-Cov & \textbf{83.9} & 83.3 & 73.5 & 55.6 & \textbf{65.4} & 62.8 & 69.9 \\
    \rowcolor{gray!15}+ GEPO (ours) & 83.1 & 83.7 & \textbf{76.6} & \textbf{60.3} & 62.4 & \textbf{64.1} & \textbf{71.2} \\
    \bottomrule
  \end{tabular}
  \label{tab:entropy_adv_results}
\end{table}

\noindent\textbf{Performance Across Multiple Benchmarks.}\hspace{0.3cm}
As shown in Table~\ref{tab:entropy_adv_results}, existing methods exhibit notable learning imbalance across tasks.
AEPO achieves strong improvements on mathematics but yields limited gains or even underperformance on physics, instruction following, and scientific reasoning.
This suggests that its optimization pressure is disproportionately concentrated on certain task categories at the expense of others.
Similarly, GRPO shows moderate improvements on some tasks but limited or negative transfer on others, \eg, CMPhysBench and MMLU-Pro.
Clip-Cov and KL-Cov provide stronger coverage-aware regularization and improve several individual benchmarks, but their gains remain uneven across task categories.
On Intern-S1-mini, Clip-Cov improves IMO-Bench and CMPhysBench, while KL-Cov further improves code generation and ChartQAPro.
However, both methods still underperform GEPO in average performance, achieving 51.5 and 51.8 compared with GEPO's 54.2, and show clear regressions on tasks such as Physics and SFE.

In contrast, GEPO achieves the best performance on 7 out of 13 benchmarks on Intern-S1-mini, spanning mathematical reasoning, physics, instruction following, and scientific understanding.
For example, GEPO attains substantial gains on challenging reasoning benchmarks such as AIME25, IMO-Bench, and GPQA, while simultaneously improving instruction following and scientific reasoning.
GEPO maintains strong gains across most benchmarks and delivers the best overall average, demonstrating that the group entropy mechanism effectively balances optimization across heterogeneous tasks.

On Qwen3.5-9B, GEPO further demonstrates its generalization capability, achieving the highest average score of 71.2.
Clip-Cov is a particularly strong baseline on this model, obtaining the best results on AIME25, Physics, and instruction following, and reaching an average score of 70.9; KL-Cov leads on MMLU-Pro and ChartQAPro with an average score of 69.9.
Nevertheless, GEPO matches or surpasses these coverage-based baselines in overall performance and achieves the best results on IMO-Bench, CMPhysBench, MMMU-Pro, SFE, and MicroVQA.
In contrast, AEPO shows significant degradation on Qwen3.5-9B compared to GRPO, \eg, IFBench and LiveCodeBench, suggesting that its uniform entropy control is particularly fragile when transferred to a base model with different entropy characteristics.
GEPO's adaptive thresholds automatically calibrate to the new model's entropy scale without any model-specific tuning, validating the model-agnostic design of the adaptive entropy mechanism.

\begin{figure}
  \centering
  \includegraphics[width=1\textwidth]{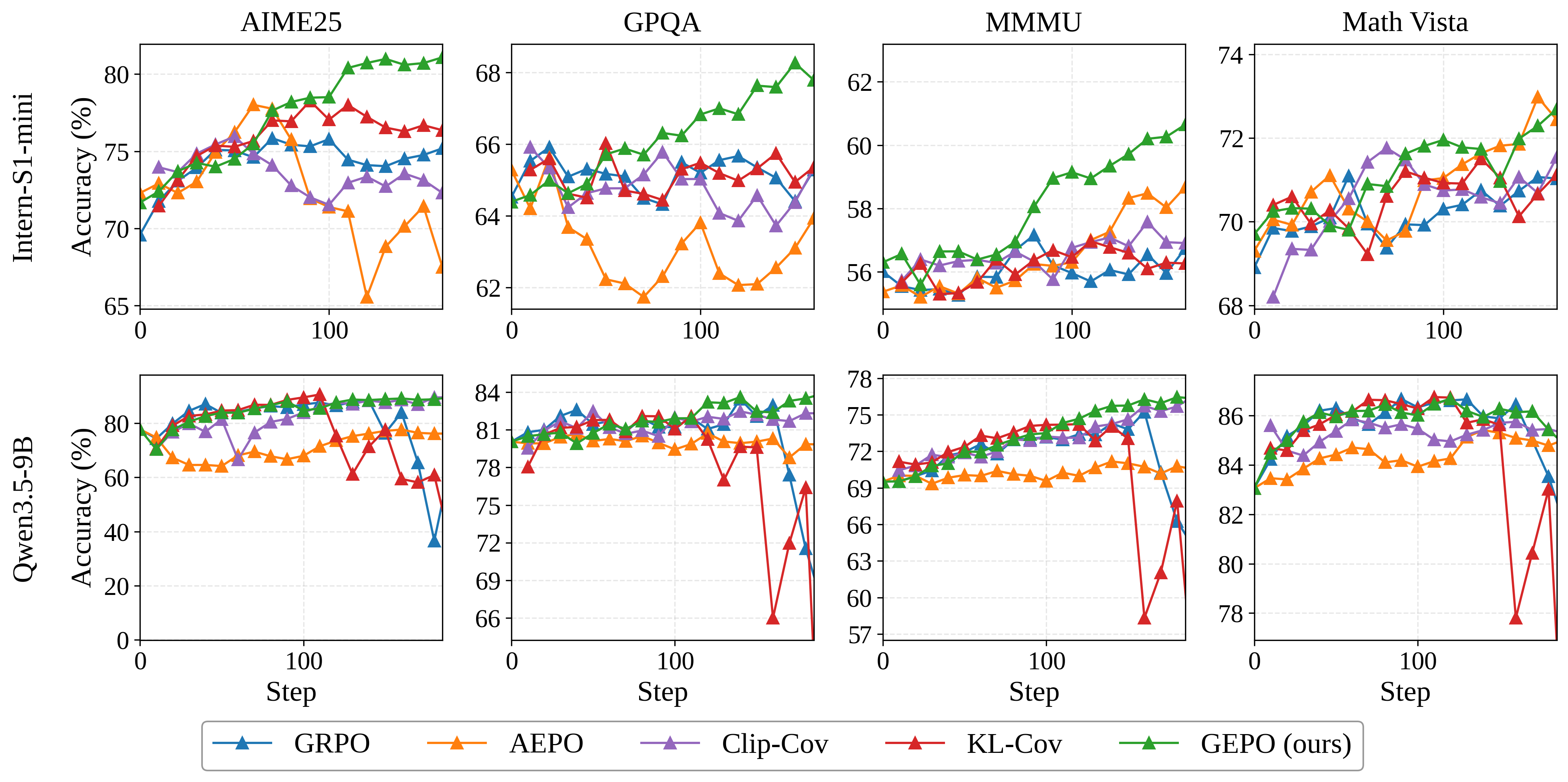}
  \caption{Typical validation curves during training process. For Intern-S1-mini and Qwen3.5-9B, GEPO improves smoothly throughout training, whereas baselines exhibit larger oscillations or late-stage performance collapse.}
  \label{fig:validation_curves}
\end{figure}

\noindent\textbf{Training Stability.}\hspace{0.3cm}
Beyond final performance, we examine the training dynamics through validation curves (Figure~\ref{fig:validation_curves}).
GEPO exhibits smooth and monotonically improving validation performance across both models and all benchmarks, with notably less variance compared to the baselines.

On Intern-S1-mini (top row), GEPO shows a clear upward trend on all four benchmarks with minimal fluctuation, whereas GRPO oscillates without clear improvement and AEPO suffers intermittent performance drops (\eg, AIME25 around step 80, GPQA declining throughout training).
The contrast is even more pronounced on Qwen3.5-9B (bottom row): GRPO undergoes catastrophic performance collapse around step 170, with AIME25 accuracy dropping from $\sim$85\% to $\sim$40\% and simultaneous degradation on GPQA, MMMU-Pro, and MathVista.
This collapse is symptomatic of unconstrained entropy dynamics under GRPO—without task-aware regulation, the policy entropy can grow unboundedly (as shown in Figure~\ref{fig:policy_entropy_curves}), eventually leading to degenerate outputs.
AEPO avoids complete collapse but exhibits persistent oscillations and generally underperforms GEPO, indicating that uniform entropy control is insufficient for heterogeneous task mixtures.
Clip-Cov and KL-Cov improve stability over unconstrained GRPO by introducing covaiance-aware constraints, but they still exhibit task-dependent fluctuations and plateau earlier on several validation benchmarks.
This suggests that controlling covaiance between action probability and the change in logits alone does not fully resolve the instability caused by heterogeneous task difficulty and task-specific exploration requirements.
Besides, KL-Cov shows training collapse on Qwen3,5-9B after 100 training steps, which is likely due to the fact that KL-Cov uses a fixed entropy threshold for all tasks, which may not be suitable for all tasks.
In contrast, GEPO maintains stable and steadily improving performance throughout training on both models, demonstrating robustness against the training instabilities that affect the baselines.
The stability of GEPO can be attributed to its group entropy control with adaptive thresholds: by independently regulating the exploration-exploitation balance for each prompt group, GEPO prevents the runaway entropy dynamics that trigger catastrophic collapse while maintaining sufficient exploration to sustain learning progress across all tasks.

\begin{figure}[!t]
  \centering
  \includegraphics[width=0.50\textwidth]{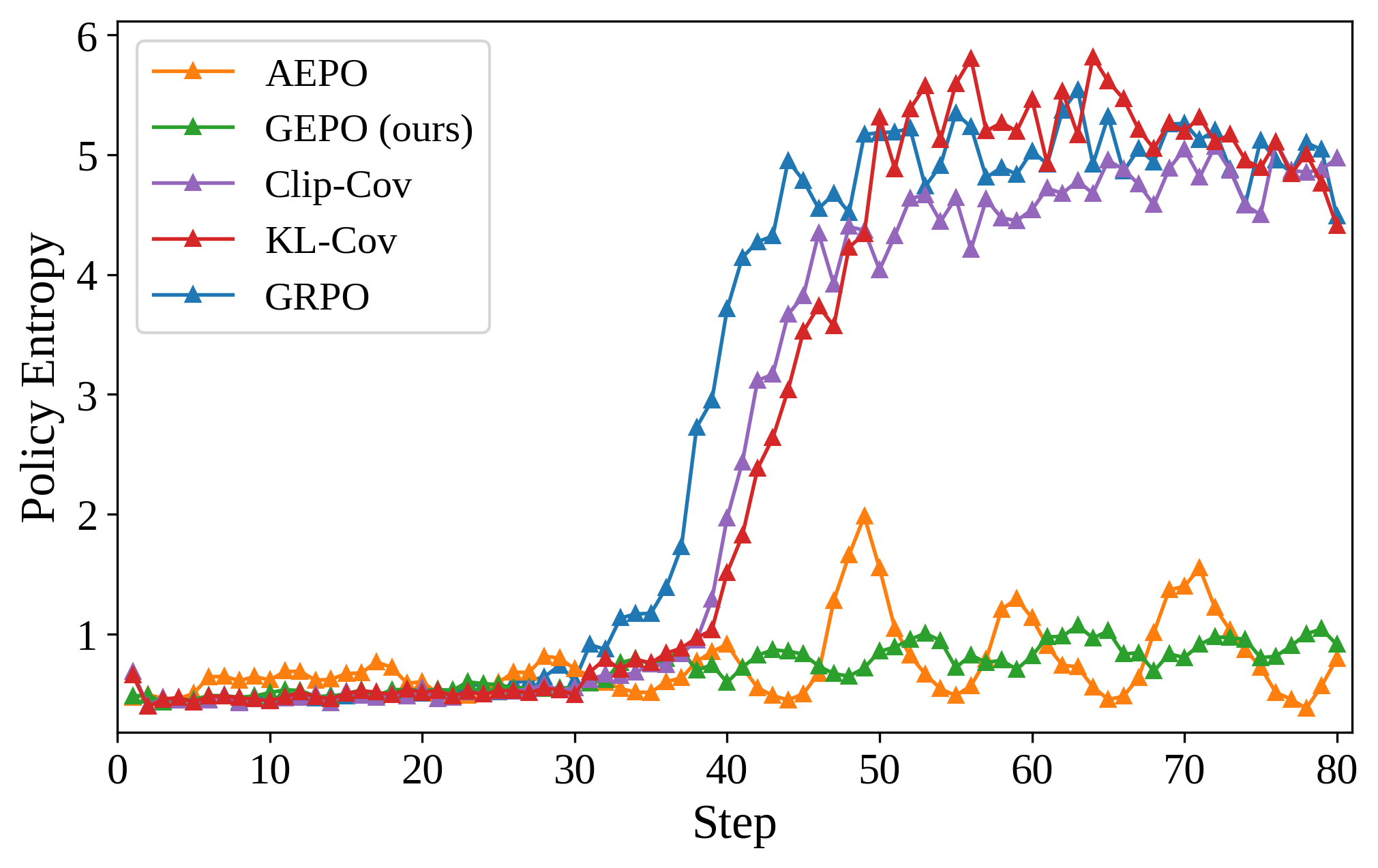}
  \caption{Policy entropy dynamics during training process of Intern-s1-mini. GEPO maintains a stable and bounded entropy trajectory, whereas the baselines exhibit runaway entropy growth or persistent oscillations.}
  \label{fig:policy_entropy_curves}
\end{figure}

\noindent\textbf{Training Dynamics of Entropy.}\hspace{0.3cm}
We further analyze how entropy evolves during training to understand the mechanism of GEPO.
As shown in Figure~\ref{fig:policy_entropy_curves}, GEPO maintains the healthy policy entropy throughout training, avoiding the premature entropy collapse that can lead to mode-seeking behavior and loss of generalization.
In contrast, GRPO suffers from runaway entropy growth, causing the model to produce degenerate and incoherent reasoning outputs, while Clip-Cov and KL-Cov show similar patterns of instability, with high entropy due to the lack of task-specific regulation.
AEPO, while avoiding complete collapse, exhibits substantial entropy oscillations throughout training, reflecting the instability of its uniform entropy control under heterogeneous task distributions. 

More importantly, the group entropy dynamics (Figure~\ref{fig:group_entropy_curves}) reveal a key distinction between GEPO and AEPO.
AEPO, which applies a uniform entropy control signal, drives the entropy of different tasks toward a similar convergence pattern, effectively imposing a one-size-fits-all exploration regime regardless of task characteristics.
GEPO, in contrast, preserves differentiated exploration levels across tasks: tasks requiring broader
exploration maintain relatively higher group entropy, while tasks with more deterministic solution patterns settle at lower entropy levels.
This heterogeneous entropy profile is consistent with the design of GEPO, which uses group entropy as a prompt-level diagnostic rather than forcing a uniform entropy target across all tasks.
This explains why GEPO achieves balanced performance improvements across diverse tasks (Dimension~1) with stable training dynamics (Dimension~2): the group entropy mechanism respects the inherent heterogeneity of multi-task learning and provides task-specific regulation without explicit task annotations.

\begin{figure}[!t]
  \centering
  \includegraphics[width=0.88\textwidth]{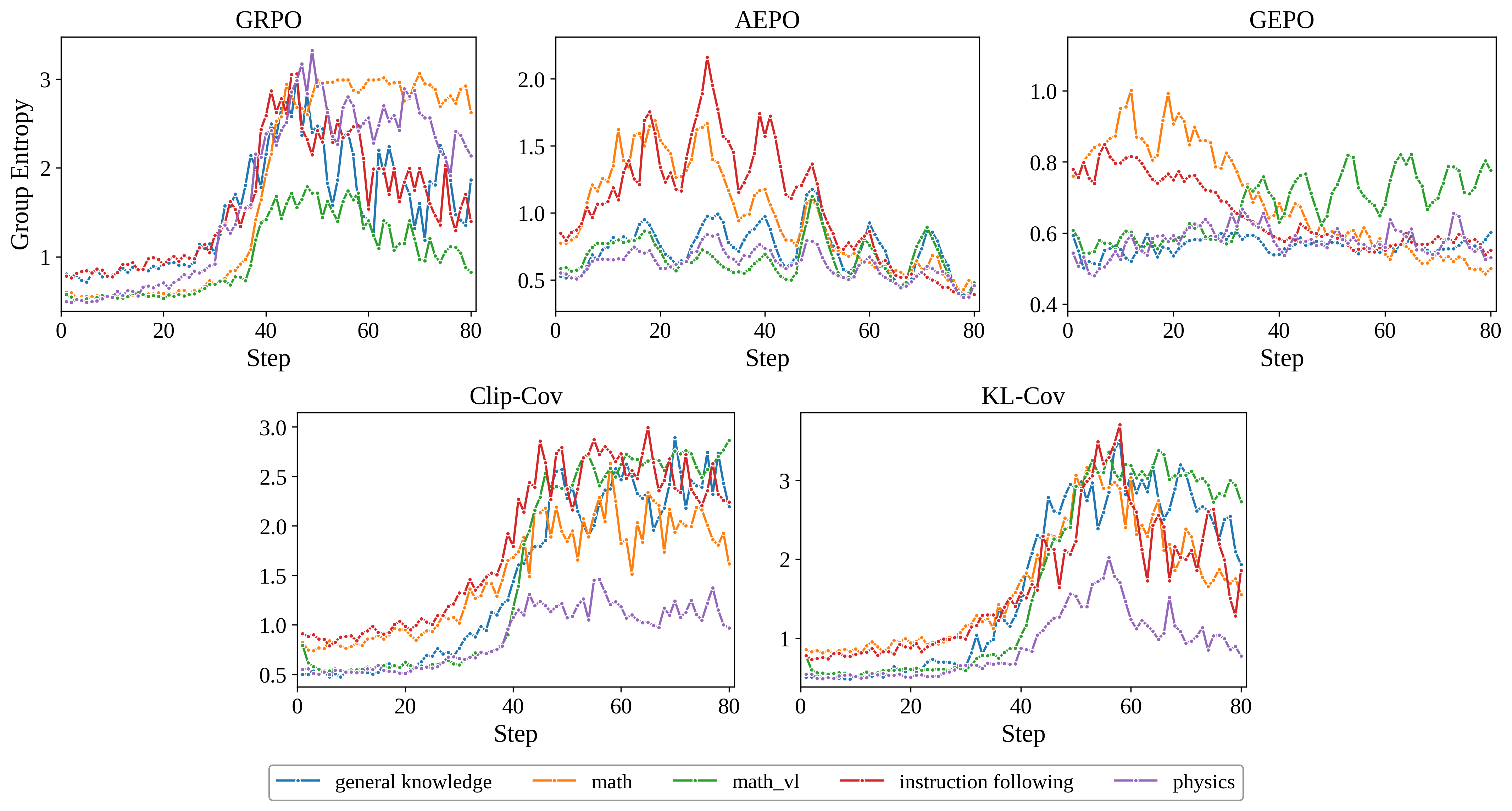}
  \caption{Task entropy dynamics during training process of Intern-s1-mini. GEPO maintains stable, domain-specific entropy regimes, while the baselines either destabilize or homogenize them.}
  \label{fig:group_entropy_curves}
\end{figure}

\noindent\textbf{Ablation Study.}\hspace{0.3cm}
To evaluate the effectiveness of asymmetric advantage shaping in GEPO, we conduct ablation studies using Intern-S1-mini.
As shown in Table~\ref{tab:ablation_asymmetric}, removing any component leads to performance degradation, confirming that all components are essential.
We observe that high entropy control contributes the most, as without it the policy suffers from unconstrained entropy growth and loses exploitation capability on hard reasoning tasks such as Physics  and AIME25.
Low entropy control prevents premature entropy collapse and maintains exploration diversity, particularly benefiting IMO-Bench and instruction following.
In addition, asymmetric advantage shaping with $\alpha_{\text{high}} < \alpha_{\text{low}}$ outperforms symmetrical Advantage Shaping, as applying a gentler nudge.

\begin{table}[!t]
  \centering
  \small
  \caption{Ablation results on asymmetric advantage shaping and bidirectional entropy control with Intern-S1-mini.}
  \setlength{\tabcolsep}{2pt}
  \begin{tabular}{@{}lccccccc@{}}
    \toprule
    & \multicolumn{3}{c}{MATH}
    & \multicolumn{2}{c}{Physics}
    & \multirow{2}{*}{\makecell{Instruction\\Following}}
    & \multirow{2}{*}{Code} \\
    \cmidrule(lr){2-4} \cmidrule(lr){5-6}
    & AIME25 & IMO-Bench & MathVista
    & Physics & CMPhysBench \\
    \midrule
    \rowcolor{gray!15} GEPO & \textbf{82.0} & \textbf{52.8} & 71.2 & \textbf{39.1} & 30.4 & \textbf{38.8} & \textbf{39.4} \\
    w/o low entropy control & 79.1 & 48.3 & \textbf{73.1} & 34.3 & \textbf{32.2} & 34.2 & 36.0 \\
    w/o high entropy control & 76.3 & 50.0 & 70.7 & 30.0 & 28.4 & 35.4 & 36.0 \\
    w/o asymmetric advantage shaping & 79.4 & 49.8 & 73.4 & 33.4 & 31.2 & 35.3 & 36.0 \\
    \midrule
    & \multicolumn{4}{c}{Knowledge \& Science}
    & \multicolumn{2}{c}{Visual \& Chart}
    & \multirow{2}{*}{\textbf{Avg.}} \\
    \cmidrule(lr){2-5} \cmidrule(lr){6-7}
    & MMLU-Pro & GPQA & MMMU-Pro & SFE
    & ChartQAPro & MicroVQA \\
    \midrule
    \rowcolor{gray!15} GEPO & \textbf{77.1} & \textbf{69.2} & 55.8 & 43.9 & 49.0 & \textbf{51.5} & \textbf{54.2} \\
    w/o low entropy control & 77.1 & 68.3 & 56.9 & \textbf{46.1} & 48.3 & 50.4 & 52.6 \\
    w/o high entropy control & 74.6 & 64.5 & 56.0 & 46.3 & 47.9 & 50.5 & 51.3 \\
    w/o asymmetric advantage shaping & 76.6 & 67.7 & \textbf{60.1} & 45.3 & \textbf{49.7} & 50.9 & 53.0 \\
    \bottomrule
  \end{tabular}
  \label{tab:ablation_asymmetric}
\end{table}

\section{Related Work}

Policy entropy has long been studied in reinforcement learning as a measure of action uncertainty and exploration, with entropy regularization and maximum-entropy objectives serving as fundamental mechanisms for encouraging exploration in stochastic policies \cite{haarnoja2018soft}.
More recently, entropy dynamics have attracted increasing attention in the post-training of LLMs, where researchers investigate their relationship with exploration, training stability, and reasoning performance \cite{cui2025entropy}.
Several studies have investigated the relationship between entropy dynamics and reasoning performance, suggesting that entropy evolution is closely associated with exploration behaviors, training stability, and the risk of premature convergence.
From an empirical perspective, \cite{cheng2026reasoning} characterize the role of high-entropy regions in facilitating reflective behaviors, pivotal reasoning steps, and exploration of alternative solution trajectories, while \cite{cui2025entropy} analyze entropy collapse during RL training and connect excessive entropy reduction with degraded reasoning performance.
Beyond empirical observations, \cite{wang2026entropy} provide a theoretical analysis of entropy evolution under policy optimization and its impact on training dynamics.
Motivated by these findings, recent approaches have introduced explicit entropy regulation mechanisms to improve RL training stability and exploration efficiency.
\cite{yang2025entropic} propose entropy control strategies to stabilize long-horizon RL optimization, while \cite{wan2026dsdr} preserve exploration diversity through global and local diversity regularization.
However, existing entropy-aware approaches primarily operate at the policy or token level, treating entropy as a global or local property of the policy distribution.
The impact of entropy variations across sampled groups, and their interaction with group-wise optimization mechanisms such as GRPO, remains largely unexplored.

\section{Conclusion}
We presented Group Entropy-Controlled Policy Optimization (GEPO) for robust multi-task reinforcement learning of large language models.
GEPO uses group entropy estimated from existing rollouts to perform asymmetric advantage shaping, moderating reinforcement in low-entropy groups and suppression in high-entropy groups through adaptive entropy boundaries.
This design regulates exploration-exploitation trade-off without explicit task annotations or additional sampling.
Across two base models and thirteen benchmarks, GEPO achieves the best average performance among the evaluated methods, exhibits stable optimization trajectories, and delivers balanced improvements across diverse task categories.
Our analysis further shows that preserving differentiated exploration regimes is more effective than driving heterogeneous tasks toward a uniform entropy pattern, establishing group entropy as a practical signal for scalable and stable multi-task RL post-training.

\clearpage
\bibliographystyle{plain}
\bibliography{ref}

@inproceedings{cheng2026reasoning,
  title={Reasoning with exploration: An entropy perspective},
  author={Cheng, Daixuan and Huang, Shaohan and Zhu, Xuekai and Dai, Bo and Zhao, Xin and Zhang, Zhenliang and Wei, Furu},
  booktitle={Proceedings of the AAAI Conference on Artificial Intelligence},
  volume={40},
  number={36},
  pages={30377--30385},
  year={2026}
}

@article{cui2025entropy,
  title={The entropy mechanism of reinforcement learning for reasoning language models},
  author={Cui, Ganqu and Zhang, Yuchen and Chen, Jiacheng and Yuan, Lifan and Wang, Zhi and Zuo, Yuxin and Li, Haozhan and Fan, Yuchen and Chen, Huayu and Chen, Weize and others},
  journal={arXiv preprint arXiv:2505.22617},
  year={2025}
}

@article{wang2026entropy,
  title={On the Entropy Dynamics in Reinforcement Fine-Tuning of Large Language Models},
  author={Wang, Shumin and Xie, Yuexiang and Zhang, Wenhao and Sun, Yuchang and Chen, Yanxi and Li, Yaliang and Zhang, Yanyong},
  journal={arXiv preprint arXiv:2602.03392},
  year={2026}
}

@article{yang2025entropic,
  title={EntroPIC: Towards Stable Long-Term Training of LLMs via Entropy Stabilization with Proportional-Integral Control},
  author={Yang, Kai and Xu, Xin and Chen, Yangkun and Liu, Weijie and Lyu, Jiafei and Lin, Zichuan and Ye, Deheng and Yang, Saiyong},
  journal={arXiv preprint arXiv:2511.15248},
  year={2025}
}

@article{wan2026dsdr,
  title={DSDR: Dual-Scale Diversity Regularization for Exploration in LLM Reasoning},
  author={Wan, Zhongwei and Shen, Yun and Dou, Zhihao and Zhou, Donghao and Zhang, Yu and Wang, Xin and Shen, Hui and Xiong, Jing and Tao, Chaofan and Zhong, Zixuan and others},
  journal={arXiv preprint arXiv:2602.19895},
  year={2026}
}

@article{team2025kimi,
  title={Kimi k1. 5: Scaling reinforcement learning with llms},
  author={Team, Kimi and Du, Angang and Gao, Bofei and Xing, Bowei and Jiang, Changjiu and Chen, Cheng and Li, Cheng and Xiao, Chenjun and Du, Chenzhuang and Liao, Chonghua and others},
  journal={arXiv preprint arXiv:2501.12599},
  year={2025}
}

@article{jaech2024openai,
  title={Openai o1 system card},
  author={Jaech, Aaron and Kalai, Adam and Lerer, Adam and Richardson, Adam and El-Kishky, Ahmed and Low, Aiden and Helyar, Alec and Madry, Aleksander and Beutel, Alex and Carney, Alex and others},
  journal={arXiv preprint arXiv:2412.16720},
  year={2024}
}

@article{hu2025open,
  title={Open-reasoner-zero: An open source approach to scaling up reinforcement learning on the base model},
  author={Hu, Jingcheng and Zhang, Yinmin and Han, Qi and Jiang, Daxin and Zhang, Xiangyu and Shum, Heung-Yeung},
  journal={arXiv preprint arXiv:2503.24290},
  year={2025}
}

@article{shao2024deepseekmath,
  title={Deepseekmath: Pushing the limits of mathematical reasoning in open language models},
  author={Shao, Zhihong and Wang, Peiyi and Zhu, Qihao and Xu, Runxin and Song, Junxiao and Bi, Xiao and Zhang, Haowei and Zhang, Mingchuan and Li, YK and Wu, Yang and others},
  journal={arXiv preprint arXiv:2402.03300},
  year={2024}
}

@misc{bai2025interns1scientificmultimodalfoundation,
      title={Intern-S1: A Scientific Multimodal Foundation Model},
      author={Lei Bai and Zhongrui Cai and Maosong Cao and Weihan Cao and Chiyu Chen and Haojiong Chen and Kai Chen and Pengcheng Chen and Ying Chen and Yongkang Chen and Yu Cheng and Yu Cheng and Pei Chu and Tao Chu and Erfei Cui and Ganqu Cui and Long Cui and Ziyun Cui and Nianchen Deng and Ning Ding and Nanqin Dong and Peijie Dong and Shihan Dou and Sinan Du and Haodong Duan and Caihua Fan and Ben Gao and Changjiang Gao and Jianfei Gao and Songyang Gao and Yang Gao and Zhangwei Gao and Jiaye Ge and Qiming Ge and Lixin Gu and Yuzhe Gu and Aijia Guo and Qipeng Guo and Xu Guo and Conghui He and Junjun He and Yili Hong and Siyuan Hou and Caiyu Hu and Hanglei Hu and Jucheng Hu and Ming Hu and Zhouqi Hua and Haian Huang and Junhao Huang and Xu Huang and Zixian Huang and Zhe Jiang and Lingkai Kong and Linyang Li and Peiji Li and Pengze Li and Shuaibin Li and Tianbin Li and Wei Li and Yuqiang Li and Dahua Lin and Junyao Lin and Tianyi Lin and Zhishan Lin and Hongwei Liu and Jiangning Liu and Jiyao Liu and Junnan Liu and Kai Liu and Kaiwen Liu and Kuikun Liu and Shichun Liu and Shudong Liu and Wei Liu and Xinyao Liu and Yuhong Liu and Zhan Liu and Yinquan Lu and Haijun Lv and Hongxia Lv and Huijie Lv and Qidang Lv and Ying Lv and Chengqi Lyu and Chenglong Ma and Jianpeng Ma and Ren Ma and Runmin Ma and Runyuan Ma and Xinzhu Ma and Yichuan Ma and Zihan Ma and Sixuan Mi and Junzhi Ning and Wenchang Ning and Xinle Pang and Jiahui Peng and Runyu Peng and Yu Qiao and Jiantao Qiu and Xiaoye Qu and Yuan Qu and Yuchen Ren and Fukai Shang and Wenqi Shao and Junhao Shen and Shuaike Shen and Chunfeng Song and Demin Song and Diping Song and Chenlin Su and Weijie Su and Weigao Sun and Yu Sun and Qian Tan and Cheng Tang and Huanze Tang and Kexian Tang and Shixiang Tang and Jian Tong and Aoran Wang and Bin Wang and Dong Wang and Lintao Wang and Rui Wang and Weiyun Wang and Wenhai Wang and Yi Wang and Ziyi Wang and Ling-I Wu and Wen Wu and Yue Wu and Zijian Wu and Linchen Xiao and Shuhao Xing and Chao Xu and Huihui Xu and Jun Xu and Ruiliang Xu and Wanghan Xu and GanLin Yang and Yuming Yang and Haochen Ye and Jin Ye and Shenglong Ye and Jia Yu and Jiashuo Yu and Jing Yu and Fei Yuan and Bo Zhang and Chao Zhang and Chen Zhang and Hongjie Zhang and Jin Zhang and Qiaosheng Zhang and Qiuyinzhe Zhang and Songyang Zhang and Taolin Zhang and Wenlong Zhang and Wenwei Zhang and Yechen Zhang and Ziyang Zhang and Haiteng Zhao and Qian Zhao and Xiangyu Zhao and Xiangyu Zhao and Bowen Zhou and Dongzhan Zhou and Peiheng Zhou and Yuhao Zhou and Yunhua Zhou and Dongsheng Zhu and Lin Zhu and Yicheng Zou},
      year={2025},
      eprint={2508.15763},
      archivePrefix={arXiv},
      primaryClass={cs.LG},
      url={https://arxiv.org/abs/2508.15763},
}

@misc{2023lmdeploy,
    title={LMDeploy: A Toolkit for Compressing, Deploying, and Serving LLM},
    author={LMDeploy Contributors},
    howpublished = {\url{https://github.com/InternLM/lmdeploy}},
    year={2023}
}

@misc{yue2025mmmuprorobustmultidisciplinemultimodal,
      title={MMMU-Pro: A More Robust Multi-discipline Multimodal Understanding Benchmark}, 
      author={Xiang Yue and Tianyu Zheng and Yuansheng Ni and Yubo Wang and Kai Zhang and Shengbang Tong and Yuxuan Sun and Botao Yu and Ge Zhang and Huan Sun and Yu Su and Wenhu Chen and Graham Neubig},
      year={2025},
      eprint={2409.02813},
      archivePrefix={arXiv},
      primaryClass={cs.CL},
      url={https://arxiv.org/abs/2409.02813}, 
}

@misc{rein2023gpqagraduatelevelgoogleproofqa,
      title={GPQA: A Graduate-Level Google-Proof Q\&A Benchmark}, 
      author={David Rein and Betty Li Hou and Asa Cooper Stickland and Jackson Petty and Richard Yuanzhe Pang and Julien Dirani and Julian Michael and Samuel R. Bowman},
      year={2023},
      eprint={2311.12022},
      archivePrefix={arXiv},
      primaryClass={cs.AI},
      url={https://arxiv.org/abs/2311.12022}, 
}

@misc{jain2024livecodebenchholisticcontaminationfree,
      title={LiveCodeBench: Holistic and Contamination Free Evaluation of Large Language Models for Code}, 
      author={Naman Jain and King Han and Alex Gu and Wen-Ding Li and Fanjia Yan and Tianjun Zhang and Sida Wang and Armando Solar-Lezama and Koushik Sen and Ion Stoica},
      year={2024},
      eprint={2403.07974},
      archivePrefix={arXiv},
      primaryClass={cs.SE},
      url={https://arxiv.org/abs/2403.07974}, 
}

@misc{pyatkin2025generalizingverifiableinstructionfollowing,
      title={Generalizing Verifiable Instruction Following}, 
      author={Valentina Pyatkin and Saumya Malik and Victoria Graf and Hamish Ivison and Shengyi Huang and Pradeep Dasigi and Nathan Lambert and Hannaneh Hajishirzi},
      year={2025},
      eprint={2507.02833},
      archivePrefix={arXiv},
      primaryClass={cs.CL},
      url={https://arxiv.org/abs/2507.02833}, 
}

@misc{wang2025cmphysbenchbenchmarkevaluatinglarge,
      title={CMPhysBench: A Benchmark for Evaluating Large Language Models in Condensed Matter Physics}, 
      author={Weida Wang and Dongchen Huang and Jiatong Li and Tengchao Yang and Ziyang Zheng and Di Zhang and Dong Han and Benteng Chen and Binzhao Luo and Zhiyu Liu and Kunling Liu and Zhiyuan Gao and Shiqi Geng and Wei Ma and Jiaming Su and Xin Li and Shuchen Pu and Yuhan Shui and Qianjia Cheng and Zhihao Dou and Dongfei Cui and Changyong He and Jin Zeng and Zeke Xie and Mao Su and Dongzhan Zhou and Yuqiang Li and Wanli Ouyang and Yunqi Cai and Xi Dai and Shufei Zhang and Lei Bai and Jinguang Cheng and Zhong Fang and Hongming Weng},
      year={2025},
      eprint={2508.18124},
      archivePrefix={arXiv},
      primaryClass={cs.LG},
      url={https://arxiv.org/abs/2508.18124}, 
}

@misc{lu2024mathvistaevaluatingmathematicalreasoning,
      title={MathVista: Evaluating Mathematical Reasoning of Foundation Models in Visual Contexts}, 
      author={Pan Lu and Hritik Bansal and Tony Xia and Jiacheng Liu and Chunyuan Li and Hannaneh Hajishirzi and Hao Cheng and Kai-Wei Chang and Michel Galley and Jianfeng Gao},
      year={2024},
      eprint={2310.02255},
      archivePrefix={arXiv},
      primaryClass={cs.CV},
      url={https://arxiv.org/abs/2310.02255}, 
}

@misc{feng2025physicsbenchmarkingfoundationmodels,
      title={PHYSICS: Benchmarking Foundation Models on University-Level Physics Problem Solving}, 
      author={Kaiyue Feng and Yilun Zhao and Yixin Liu and Tianyu Yang and Chen Zhao and John Sous and Arman Cohan},
      year={2025},
      eprint={2503.21821},
      archivePrefix={arXiv},
      primaryClass={cs.AI},
      url={https://arxiv.org/abs/2503.21821}, 
}

@misc{zhou2025scientistsexamprobingcognitive,
      title={Scientists' First Exam: Probing Cognitive Abilities of MLLM via Perception, Understanding, and Reasoning}, 
      author={Yuhao Zhou and Yiheng Wang and Xuming He and Ao Shen and Ruoyao Xiao and Zhiwei Li and Qiantai Feng and Zijie Guo and Yuejin Yang and Hao Wu and Wenxuan Huang and Jiaqi Wei and Dan Si and Xiuqi Yao and Jia Bu and Haiwen Huang and Manning Wang and Tianfan Fu and Shixiang Tang and Ben Fei and Dongzhan Zhou and Fenghua Ling and Yan Lu and Siqi Sun and Chenhui Li and Guanjie Zheng and Jiancheng Lv and Wenlong Zhang and Lei Bai},
      year={2025},
      eprint={2506.10521},
      archivePrefix={arXiv},
      primaryClass={cs.AI},
      url={https://arxiv.org/abs/2506.10521}, 
}

@misc{burgess2025microvqamultimodalreasoningbenchmark,
      title={MicroVQA: A Multimodal Reasoning Benchmark for Microscopy-Based Scientific Research}, 
      author={James Burgess and Jeffrey J Nirschl and Laura Bravo-Sánchez and Alejandro Lozano and Sanket Rajan Gupte and Jesus G. Galaz-Montoya and Yuhui Zhang and Yuchang Su and Disha Bhowmik and Zachary Coman and Sarina M. Hasan and Alexandra Johannesson and William D. Leineweber and Malvika G Nair and Ridhi Yarlagadda and Connor Zuraski and Wah Chiu and Sarah Cohen and Jan N. Hansen and Manuel D Leonetti and Chad Liu and Emma Lundberg and Serena Yeung-Levy},
      year={2025},
      eprint={2503.13399},
      archivePrefix={arXiv},
      primaryClass={cs.CV},
      url={https://arxiv.org/abs/2503.13399}, 
}

@misc{masry2025chartqaprodiversechallengingbenchmark,
      title={ChartQAPro: A More Diverse and Challenging Benchmark for Chart Question Answering}, 
      author={Ahmed Masry and Mohammed Saidul Islam and Mahir Ahmed and Aayush Bajaj and Firoz Kabir and Aaryaman Kartha and Md Tahmid Rahman Laskar and Mizanur Rahman and Shadikur Rahman and Mehrad Shahmohammadi and Megh Thakkar and Md Rizwan Parvez and Enamul Hoque and Shafiq Joty},
      year={2025},
      eprint={2504.05506},
      archivePrefix={arXiv},
      primaryClass={cs.CL},
      url={https://arxiv.org/abs/2504.05506}, 
}

@article{shen2025qwenlong,
  title={Qwenlong-l1. 5: Post-training recipe for long-context reasoning and memory management},
  author={Shen, Weizhou and Yang, Ziyi and Li, Chenliang and Lu, Zhiyuan and Peng, Miao and Sun, Huashan and Shi, Yingcheng and Liao, Shengyi and Lai, Shaopeng and Zhang, Bo and others},
  journal={arXiv preprint arXiv:2512.12967},
  year={2025}
}

@article{chen2025minimax,
  title={Minimax-m1: Scaling test-time compute efficiently with lightning attention},
  author={Chen, Aili and Li, Aonian and Gong, Bangwei and Jiang, Binyang and Fei, Bo and Yang, Bo and Shan, Boji and Yu, Changqing and Wang, Chao and Zhu, Cheng and others},
  journal={arXiv preprint arXiv:2506.13585},
  year={2025}
}

@book{tsybakov2009introduction,
  title={Introduction to Nonparametric Estimation},
  author={Tsybakov, Alexandre B.},
  year={2009},
  publisher={Springer},
  series={Springer Series in Statistics}
}

@inproceedings{haarnoja2018soft,
  title={Soft actor-critic: Off-policy maximum entropy deep reinforcement learning with a stochastic actor},
  author={Haarnoja, Tuomas and Zhou, Aurick and Abbeel, Pieter and Levine, Sergey},
  booktitle={International conference on machine learning},
  pages={1861--1870},
  year={2018},
  organization={Pmlr}
}

@book{sutton1998reinforcement,
  title={Reinforcement learning: An introduction},
  author={Sutton, Richard S and Barto, Andrew G and others},
  volume={1},
  number={1},
  year={1998},
  publisher={MIT press Cambridge}
}

@article{bellemare2016unifying,
  title={Unifying count-based exploration and intrinsic motivation},
  author={Bellemare, Marc and Srinivasan, Sriram and Ostrovski, Georg and Schaul, Tom and Saxton, David and Munos, Remi},
  journal={Advances in neural information processing systems},
  volume={29},
  year={2016}
}

@misc{2023xtuner,
    title={XTuner: A Toolkit for Efficiently Fine-tuning LLM},
    author={XTuner Contributors},
    howpublished = {\url{https://github.com/InternLM/xtuner}},
    year={2023}
}


\clearpage
\appendix
\section{Advantage Asymmetry under Per-Group Normalization}
\label{app:advantage_asymmetry_analysis}

We formalize how per-group normalization produces structurally different advantage distributions depending on the group accuracy, and quantify the resulting signal concentration effect described in Section~\ref{sec:group_entropy}.

\textbf{Setup.}
Consider a prompt group with $K$ responses and binary rewards $r_i \in \{0, 1\}$, where $n_+$ responses are correct ($r_i = 1$) and $n_- = K - n_+$ are incorrect ($r_i = 0$). (Since the normalized advantage $A_i = (r_i - \bar{r})/\sigma_r$ is invariant to affine transformations of the reward, this binary formulation applies without loss of generality to any two-valued reward scheme, \eg, $r_i \in \{-1, +1\}$.) The group accuracy is $p = n_+ / K$.
GRPO's per-group normalization yields advantages:
\begin{equation}
    A_i = \frac{r_i - \bar{r}}{\sigma_r}, \qquad \text{where } \bar{r} = p, \quad \sigma_r = \sqrt{p(1-p)}.
\end{equation}
Substituting, correct responses receive $A_+ = \frac{1-p}{\sqrt{p(1-p)}} = \sqrt{\frac{1-p}{p}}$, and incorrect responses receive $A_- = \frac{-p}{\sqrt{p(1-p)}} = -\sqrt{\frac{p}{1-p}}$.

\textbf{Signal concentration.}
The total positive signal equals $n_+ \cdot A_+ = K\sqrt{p(1-p)}$, which is symmetric and equals the magnitude of the total negative signal $n_- \cdot |A_-|$ by the zero-mean constraint.
However, the \emph{per-response} signal magnitudes are asymmetric:
\begin{equation}
    |A_+| = \sqrt{\frac{1-p}{p}}, \qquad |A_-| = \sqrt{\frac{p}{1-p}}.
\end{equation}
As $p\to0$, $|A_+|\to\infty$ while $|A_-|\to0$: the positive signal concentrates on a small number of correct responses, whereas the negative signal is distributed across many incorrect responses with small per-response magnitude. 
In our multi-task rollouts, such low-accuracy groups are empirically associated with higher group-entropy regimes.

\textbf{Skewness.}
The skewness of the advantage distribution in a single group with accuracy $p$ is:
\begin{equation}
    \text{skew}(A) = \frac{1 - 2p}{\sqrt{p(1-p)}}.
    \label{eq:advantage_skewness}
\end{equation}
This is zero when $p = 0.5$ (perfectly balanced), positive when $p < 0.5$ (more incorrect than correct), and grows without bound as $p \to 0$.
For our observed task accuracies: Physics ($p \approx 0.48$) yields skewness $\approx 0.08$, Math ($p \approx 0.39$) yields $\approx 0.45$, and Instruction Following ($p \approx 0.16$) yields $\approx 1.86$.

\textbf{Remark: pooled vs.\ single-group skewness.}
Equation~\ref{eq:advantage_skewness} gives the skewness for a \emph{single} group with accuracy exactly $p$.
The empirical skewness values reported in Observation~2 of the main text (Physics $\approx 0$, Math $\approx 0.75$, IF $\approx 2.32$) are instead computed by pooling advantages across \emph{all} groups of a given task.
Since different prompts within the same task have different accuracies $p_j$, the pooled skewness equals $\frac{1}{N}\sum_j \text{skew}(p_j)$ rather than $\text{skew}(\bar{p})$.
Because $\text{skew}(p) = (1-2p)/\sqrt{p(1-p)}$ is \emph{convex} on $(0, 0.5)$, Jensen's inequality gives
\[
    \frac{1}{N}\sum_{j=1}^{N} \text{skew}(p_j) \;\geq\; \text{skew}\!\left(\frac{1}{N}\sum_{j=1}^{N} p_j\right),
\]
with equality only when all groups share the same accuracy.
This accounts for the quantitative gap between the formula predictions and the empirical values, and implies that heterogeneous group accuracies \emph{amplify} the advantage asymmetry beyond what the single-group analysis predicts.


\section{Structural Bias from Entropy Heterogeneity}
\label{app:structural_bias}

This appendix provides the formal setup and complete proofs for Propositions~\ref{prop:distortion} and~\ref{prop:adv_bias} stated in the main text, as well as a finite-sample error decomposition that clarifies the distinct sources of estimation error in group-based advantage computation.

\subsection{Formal Setup}

For a prompt $x$, let $\mathcal{A}_x$ denote the finite response
space induced by the vocabulary and maximum generation length, with
$|\mathcal{A}_x|=N_x$. We define
\[
\pi_x(a):=\pi_\theta(a\mid x),
\qquad
\mu_x(a):=\mathbb{E}[R\mid x,a].
\]
During training, group responses are sampled independently from the
policy:
\[
a_1,\ldots,a_K\overset{\mathrm{i.i.d.}}{\sim}\pi_x.
\]

We introduce a reference measure $\nu_x$ over $\mathcal{A}_x$.
When explicitly stated, we take $\nu_x(a)=1/N_x$. The corresponding
reward CDFs are
\[
F_x^\nu(t):=\nu_x(\{a:\mu_x(a)\leq t\}),
\qquad
F_x^\pi(t):=\pi_x(\{a:\mu_x(a)\leq t\}).
\]

We define two reward cumulative distribution functions:
\begin{itemize}[leftmargin=*]
    \item \textbf{Reference reward CDF:} $F_x^\nu(t) := \nu_x(\{a:\mu_x(a)\le t\})$
    \item \textbf{Policy-weighted reward CDF:} $F_x^\pi(t) := \pi_x(\{a:\mu_x(a)\le t\})$
\end{itemize}
Standard group-based methods observe samples from $F_x^\pi$, not $F_x^\nu$. The discrepancy between the two is governed by how far the policy deviates from the reference measure.

\subsection{Proof of Proposition~\ref{prop:distortion} (Distribution Distortion Bound)}

\begin{theorem}[Restatement]
For any prompt $x$,
\[
\sup_t \left|F_x^\pi(t) - F_x^\nu(t)\right| \;\leq\; \mathrm{TV}(\pi_x, \nu_x).
\]
If $\nu_x$ is the uniform distribution over $\mathcal{A}_x$, then
\[
\mathrm{TV}(\pi_x, \nu_x) \;\leq\; \sqrt{\frac{1}{2}(\log N_x - H(x))}.
\]
\end{theorem}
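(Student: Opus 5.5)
The plan has two independent parts, matching the two displayed inequalities.

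\textbf{First inequality.} I would use the variational characterization of total variation on the finite space $\mathcal{A}_x$:
\[
\mathrm{TV}(\pi_x,\nu_x)=\max_{S\subseteq\mathcal{A}_x}|\pi_x(S)-\nu_x(S)|=\tfrac12\sum_a|\pi_x(a)-\nu_x(a)|.
\]
For each threshold $t$, take the sublevel set $S_t:=\{a:\mu_x(a)\le t\}$. By definition $F_x^\pi(t)=\pi_x(S_t)$ and $F_x^\nu(t)=\nu_x(S_t)$. Therefore $|F_x^\pi(t)-F_x^\nu(t)|\le \mathrm{TV}(\pi_x,\nu_x)$ for every $t$, and taking the supremum over $t$ gives the claim. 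If the max-over-sets form needs justification, I would take $S^\star=\{a:\pi_x(a)>\nu_x(a)\}$. Both measures have total mass one, so the positive and negative parts of $\pi_x-\nu_x$ each sum to half the $\ell_1$ norm. This shows any $|\pi_x(S)-\nu_x(S)|$ is at most that half-sum, with equality at $S^\star$.

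\textbf{Second inequality.} With $\nu_x(a)=1/N_x$, I would compute the KL divergence directly:
\[
\mathrm{KL}(\pi_x\|\nu_x)=\sum_a\pi_x(a)\log\bigl(N_x\pi_x(a)\bigr)=\log N_x-H(x).
\]
Terms with $\pi_x(a)=0$ vanish under the convention $0\log0=0$, and $\nu_x$ has full support, so absolute continuity holds and the KL is finite. Pinsker's inequality $\mathrm{TV}\le\sqrt{\tfrac12\mathrm{KL}}$ (with natural logarithms, matching the entropy) then yields the bound. Nonnegativity of KL also confirms $\log N_x-H(x)\ge0$, so the square root is well defined.

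\textbf{Main obstacle.} There is no genuine difficulty here. The only points that need care are using a consistent normalization of TV (the half-$\ell_1$ convention, so that Pinsker's constant $1/2$ is correct) and the support/logarithm-base conventions. I would state these explicitly rather than re-derive Pinsker, which is cited from \cite{tsybakov2009introduction}.
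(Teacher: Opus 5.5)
Your proposal is correct and takes the same approach as the paper: the sublevel sets $\{a:\mu_x(a)\le t\}$ together with the sup-over-sets form of $\mathrm{TV}$ give the first bound, and the direct computation $\mathrm{KL}(\pi_x\|\nu_x)=\log N_x-H(x)$ followed by Pinsker gives the second. Your remarks on the half-$\ell_1$ normalization of $\mathrm{TV}$ (needed for Pinsker's constant $\tfrac12$), the support and logarithm conventions, and nonnegativity of the square-root argument are sound details that the paper leaves implicit.
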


\begin{proof}
For any threshold $t$, define the level set $B_t := \{a \in \mathcal{A}_x : \mu_x(a) \le t\}$. Then
\[
F_x^\pi(t) - F_x^\nu(t) = \pi_x(B_t) - \nu_x(B_t).
\]
Taking the supremum over all $t$ (equivalently, over all level sets $B_t$):
\[
\sup_t |F_x^\pi(t) - F_x^\nu(t)| \;\leq\; \sup_{B \subseteq \mathcal{A}_x} |\pi_x(B) - \nu_x(B)| \;=\; \mathrm{TV}(\pi_x, \nu_x).
\]
The first inequality holds because level sets $\{B_t\}$ form a subset of all measurable sets.

For the second part, when $\nu_x$ is uniform:
\[
\mathrm{KL}(\pi_x \| \nu_x) = \sum_a \pi_x(a) \log \frac{\pi_x(a)}{1/N_x} = \sum_a \pi_x(a) \log \pi_x(a) + \log N_x = \log N_x - H(x).
\]
Applying Pinsker's inequality (\cite{tsybakov2009introduction}) $\mathrm{TV}(P,Q) \le \sqrt{\frac{1}{2}\mathrm{KL}(P\|Q)}$:
\[
\mathrm{TV}(\pi_x, \nu_x) \;\leq\; \sqrt{\frac{1}{2}(\log N_x - H(x))}.
\]
\end{proof}

\subsection{Proof of Proposition~\ref{prop:adv_bias} (Advantage Bias Bound)}

Define the policy-weighted and reference moments:
\[
m_x^\pi := \mathbb{E}_{\pi_x}[\mu_x(a)], \quad (\sigma_x^\pi)^2 := \mathrm{Var}_{\pi_x}(\mu_x(a)), \quad m_x^\nu := \mathbb{E}_{\nu_x}[\mu_x(a)], \quad (\sigma_x^\nu)^2 := \mathrm{Var}_{\nu_x}(\mu_x(a)),
\]
and the corresponding standardized advantages:
\[
Z_x^\pi(a) = \frac{\mu_x(a) - m_x^\pi}{\sigma_x^\pi}, \qquad Z_x^\nu(a) = \frac{\mu_x(a) - m_x^\nu}{\sigma_x^\nu}.
\]

\begin{theorem}[Restatement]
Under the assumptions $|\mu_x(a)| \le M$ and $\sigma_x^\pi, \sigma_x^\nu \ge \sigma_0 > 0$, there exists $C = C(M, \sigma_0) > 0$ such that
\[
\sup_{a \in \mathcal{A}_x} |Z_x^\pi(a) - Z_x^\nu(a)| \;\leq\; C\,\mathrm{TV}(\pi_x, \nu_x).
\]
\end{theorem}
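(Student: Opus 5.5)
We bound the difference of the two standardized quantities by splitting it into a centering term and a scaling term. For each $a\in\mathcal{A}_x$ we write
\[
Z_x^\pi(a)-Z_x^\nu(a)=\frac{m_x^\nu-m_x^\pi}{\sigma_x^\pi}+\bigl(\mu_x(a)-m_x^\nu\bigr)\Bigl(\frac{1}{\sigma_x^\pi}-\frac{1}{\sigma_x^\nu}\Bigr).
\]
Both terms are controlled by moment differences, which in turn are controlled by $\mathrm{TV}(\pi_x,\nu_x)$. The key elementary fact is the following. For any $f$ with $|f|\le B$,
\[
|\mathbb{E}_{\pi_x}f-\mathbb{E}_{\nu_x}f|=\Bigl|\sum_a f(a)(\pi_x(a)-\nu_x(a))\Bigr|\le B\sum_a|\pi_x(a)-\nu_x(a)|=2B\,\mathrm{TV}(\pi_x,\nu_x).
\]
Here we use the convention $\mathrm{TV}=\sup_B|\pi_x(B)-\nu_x(B)|=\tfrac12\|\pi_x-\nu_x\|_1$, which is the one used in the proof of Proposition~\ref{prop:distortion}.

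\textbf{Step 1 (means).} We apply the key fact with $f=\mu_x$ and $B=M$, which gives $|m_x^\pi-m_x^\nu|\le 2M\,\mathrm{TV}$. Since $\sigma_x^\pi\ge\sigma_0$, the first term is at most $(2M/\sigma_0)\mathrm{TV}$.

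\textbf{Step 2 (variances).} We write $(\sigma_x^\pi)^2=\mathbb{E}_\pi\mu_x^2-(m_x^\pi)^2$, and likewise for $\nu$. Applying the key fact with $f=\mu_x^2$ and $B=M^2$ bounds the second-moment difference by $2M^2\,\mathrm{TV}$. For the squared means we use $|(m^\pi)^2-(m^\nu)^2|=|m^\pi+m^\nu|\,|m^\pi-m^\nu|\le 2M\cdot 2M\,\mathrm{TV}$. Together these give $|(\sigma_x^\pi)^2-(\sigma_x^\nu)^2|\le 6M^2\,\mathrm{TV}$. We then pass to standard deviations via
\[
|\sigma^\pi-\sigma^\nu|=\frac{|(\sigma^\pi)^2-(\sigma^\nu)^2|}{\sigma^\pi+\sigma^\nu}\le\frac{3M^2}{\sigma_0}\mathrm{TV},
\]
and then to reciprocals via
\[
\Bigl|\frac{1}{\sigma^\pi}-\frac{1}{\sigma^\nu}\Bigr|=\frac{|\sigma^\pi-\sigma^\nu|}{\sigma^\pi\sigma^\nu}\le\frac{3M^2}{\sigma_0^3}\mathrm{TV}.
\]
The lower bound $\sigma_0$ is essential at exactly this point: without it, the reciprocal map is not Lipschitz and the argument fails.

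\textbf{Step 3 (assemble).} Since $|\mu_x(a)-m_x^\nu|\le 2M$ uniformly in $a$, the second term is at most $(6M^3/\sigma_0^3)\,\mathrm{TV}$. Taking the supremum over $a$ then yields the claim with
\[
C=\frac{2M}{\sigma_0}+\frac{6M^3}{\sigma_0^3},
\]
which depends only on $(M,\sigma_0)$. The second inequality in \eqref{eq:adv_bias_bound} follows immediately by chaining with the Pinsker bound from Proposition~\ref{prop:distortion} when $\nu_x$ is uniform.

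All of the steps are routine. The only places needing care are fixing the TV normalization, so that the factor of $2$ is tracked consistently, and the variance step, where the square-root and reciprocal maps must be handled using the lower bound $\sigma_0$ rather than any smoothness assumption.
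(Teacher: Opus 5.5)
Your proof is correct and follows the paper's argument essentially step for step. You use the same centering/scaling decomposition, the same $2B\,\mathrm{TV}$ moment bound for the first and second moments, the same $6M^2\,\mathrm{TV}$ variance bound, and the same passage to $\sigma$ and $1/\sigma$ through the lower bound $\sigma_0$. The only difference is that you use $\sigma_x^\pi+\sigma_x^\nu\ge 2\sigma_0$ where the paper uses the weaker $\sigma_x^\pi+\sigma_x^\nu\ge\sigma_0$, so your constant $2M/\sigma_0+6M^3/\sigma_0^3$ is slightly sharper than the paper's $2M/\sigma_0+12M^3/\sigma_0^3$.
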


\begin{proof}
\textbf{Step 1: Mean difference.}
\[
|m_x^\pi - m_x^\nu| = \left|\sum_a (\pi_x(a) - \nu_x(a))\mu_x(a)\right| \le 2M \cdot \mathrm{TV}(\pi_x, \nu_x),
\]
where we used $\sum_a |p(a)-q(a)| = 2\,\mathrm{TV}(p,q)$ and $|\mu_x(a)| \le M$.

\textbf{Step 2: Second moment difference.}
Let $u_x^\pi := \mathbb{E}_{\pi_x}[\mu_x(a)^2]$ and $u_x^\nu := \mathbb{E}_{\nu_x}[\mu_x(a)^2]$. Similarly:
\[
|u_x^\pi - u_x^\nu| \le 2M^2 \cdot \mathrm{TV}(\pi_x, \nu_x).
\]

\textbf{Step 3: Variance difference.}
Since $(\sigma_x^\pi)^2 = u_x^\pi - (m_x^\pi)^2$ and $(\sigma_x^\nu)^2 = u_x^\nu - (m_x^\nu)^2$:
\begin{align*}
|(\sigma_x^\pi)^2 - (\sigma_x^\nu)^2| &\le |u_x^\pi - u_x^\nu| + |(m_x^\pi)^2 - (m_x^\nu)^2| \\
&\le 2M^2 \cdot \mathrm{TV} + |m_x^\pi + m_x^\nu| \cdot |m_x^\pi - m_x^\nu| \\
&\le 2M^2 \cdot \mathrm{TV} + 2M \cdot 2M \cdot \mathrm{TV} \\
&= 6M^2 \cdot \mathrm{TV}(\pi_x, \nu_x).
\end{align*}
Using $|a^2 - b^2| = |a-b||a+b| \ge |a-b| \cdot \sigma_0$ (since $\sigma_x^\pi + \sigma_x^\nu \ge \sigma_0$):
\[
|\sigma_x^\pi - \sigma_x^\nu| \le \frac{6M^2}{\sigma_0} \cdot \mathrm{TV}(\pi_x, \nu_x) =: C_1 \cdot \mathrm{TV}.
\]

\textbf{Step 4: Advantage difference.}
For any $a \in \mathcal{A}_x$:
\begin{align*}
|Z_x^\pi(a) - Z_x^\nu(a)| &= \left|\frac{\mu_x(a) - m_x^\pi}{\sigma_x^\pi} - \frac{\mu_x(a) - m_x^\nu}{\sigma_x^\nu}\right| \\
&\le \frac{|m_x^\pi - m_x^\nu|}{\sigma_x^\pi} + |\mu_x(a) - m_x^\nu| \cdot \left|\frac{1}{\sigma_x^\pi} - \frac{1}{\sigma_x^\nu}\right| \\
&\le \frac{2M \cdot \mathrm{TV}}{\sigma_0} + 2M \cdot \frac{|\sigma_x^\pi - \sigma_x^\nu|}{\sigma_0^2} \\
&\le \frac{2M}{\sigma_0} \cdot \mathrm{TV} + \frac{2M \cdot C_1}{\sigma_0^2} \cdot \mathrm{TV} \\
&= \underbrace{\left(\frac{2M}{\sigma_0} + \frac{12M^3}{\sigma_0^3}\right)}_{=:\,C} \cdot \mathrm{TV}(\pi_x, \nu_x).
\end{align*}
\end{proof}



\end{document}